\documentclass{article}
\usepackage{iclr2026_conference,times}

\usepackage{amsmath,amsfonts,bm}

\def\eqref#1{equation~\ref{#1}}

\def\1{\bm{1}}

\DeclareMathAlphabet{\mathsfit}{\encodingdefault}{\sfdefault}{m}{sl}
\SetMathAlphabet{\mathsfit}{bold}{\encodingdefault}{\sfdefault}{bx}{n}

\newcommand{\E}{\mathbb{E}}

\usepackage{hyperref}
\usepackage{url}
\usepackage{booktabs}
\usepackage{array}
\usepackage{amsmath, amssymb, empheq}
\usepackage[nameinlink,noabbrev]{cleveref}
\usepackage{microtype}
\allowdisplaybreaks
\usepackage{xcolor}
\usepackage{subcaption}
\usepackage{graphicx}
\usepackage{makecell}
\usepackage[most]{tcolorbox}
\usepackage{wrapfig}

\definecolor{newblue}{HTML}{0064E0}

\definecolor{nblue}{HTML}{E3EDFF}

\definecolor{ngreen}{HTML}{E7FCF2}

\definecolor{nred}{HTML}{FFE8ED}

\title{ExecTune: Effective Steering of Black-Box LLMs with Guide Models}

\iclrfinalcopy

\author{Vijay Lingam\thanks{Corresponding author.}\\
AWS AI\\
\href{mailto:vjlingam@amazon.com}{\texttt{vjlingam@amazon.com}}
\And
Aditya Golatkar\\
AWS AI\\
\And
Anwesan Pal\\
AWS AI
\And
Ben Vo\\
AWS AI
\And
Narayanan Sadagopan\\
AWS AI
\And
Alessandro Achille\\
AWS AI
\And
Jun Huan\\
AWS AI
\And
Anoop Deoras\\
AWS AI
\And
Stefano Soatto\\
AWS AI
}

\newcommand{\GCoP}{\textsc{GCoP}}

\usepackage{amsthm}

\newtheorem{theorem}{Theorem}

\DeclareMathOperator{\TV}{TV}

\newtcolorbox[auto counter]{mainbox}[2][]{
  colframe=ngreen,
  colback=ngreen,
  breakable,
  before upper={{\textbf{Main Contributions.} #2}},
  #1
}

\begin{document}

\maketitle

\begin{abstract}

For large language models deployed through black-box APIs, recurring inference costs often dominate one-time training costs, motivating composed agentic systems that amortize expensive reasoning into reusable intermediate representations. We study a broad class of such systems, termed \emph{Guide--Core Policies} (\GCoP), in which a \emph{guide} model generates a structured \textit{strategy} that is executed by a black-box \emph{core} model. This abstraction subsumes base, supervised, and advisor-style approaches, which differ primarily in how the guide is trained. We formalize \GCoP~under a cost-sensitive utility objective and show that end-to-end performance is governed by \emph{guide-averaged executability}: the probability that a strategy can be faithfully followed by the core. Our analysis reveals that existing instantiations of \GCoP~often fail to optimize executability under deployment constraints, leading to brittle strategies and inefficient computation. Guided by these insights, we propose \textsc{ExecTune}, a principled training recipe that combines teacher-guided acceptance sampling, supervised fine-tuning, and structure-aware reinforcement learning to directly optimize syntactic validity, execution success, and cost efficiency. Across mathematical reasoning and code-generation benchmarks, \GCoP~with \textsc{ExecTune} improves accuracy by up to \textbf{9.2\%} over prior state-of-the-art baselines while reducing inference cost by up to \textbf{22.4\%}. \GCoP~with \textsc{ExecTune} enables Claude Haiku-3.5 to surpass Sonnet-3.5 on math and code tasks and comes within \textbf{1.7\%} absolute accuracy of Sonnet 4 at \textbf{38\%} lower cost. Beyond efficiency, GCoP enables modular adaptation by updating guides without retraining the core.\looseness=-1

\end{abstract}

\section{Introduction}
\label{sec:introduction}

Large language models (LLMs) have demonstrated strong performance on complex reasoning and programming tasks, yet their deployment at scale remains constrained by inference cost, latency, and limited adaptability under black-box access. As inference is invoked repeatedly in downstream applications, these recurring costs often dominate one-time training expenses, motivating the design of agentic systems that amortize expensive reasoning into reusable intermediate representations or memories. Thus, agentic inference can be formally written as a cost-sensitive net utility objective~\citep{achille2025ai, zabounidis2025re, kleinman2025e1}
\begin{equation}
    \boxed{J^\pi(s_0) = V^\pi(s_0) - \lambda T^\pi(s_0)}
\end{equation}
where, $\pi$ is the agentic policy, $s_0$ an initial state (or task), and $V^\pi$ denotes the expected task reward, $T^\pi$ denotes the expected inference-time cost, and $\lambda$ controls the trade-off between performance and computation. This formulation captures realistic deployment settings in which marginal improvements in accuracy must be weighed against increased latency and expense.

This objective suggests a simple design principle: separate expensive deliberation from cheap execution. To address this challenge, we propose a broad class of composed policies, \emph{Guide-Core Policies} (GCoP) that decomposes reasoning into two stages: a \emph{small trainable-guide} model generates a high-level plan, advice, or strategy, and a \emph{small black-box core} model executes it to produce the final output for a given task. \GCoP~improves efficiency and reliability by making abstract reasoning learnable and reusable across tasks while keeping execution \emph{cheap and robust}. Moreover, \GCoP~subsumes and outperforms baselines like prompting, supervised fine-tuning, and~\textsc{Advisor} models~\citep{asawa2025train,li2025matryoshka}, using a composed policy whose behavior is governed jointly by guide generation, execution reliability, and computational cost.

Our analysis identifies executability as the key component in GCoP: the probability that a guide-produced strategy is parseable and can be successfully followed by the core. We show via a student–teacher mixture analysis (see \autoref{fig:sft-passk}) that the performance gap to a large black-box baseline is controlled by guide-averaged executability; we reduce this gap through \emph{guide-training} which explicitly raises executability. The result is structured, repeatable, reliable plans—strategies that the core can consistently execute—yielding higher utility by turning high-level reasoning into predictable, reusable action (\autoref{app:student_teacher_mixture}).

Despite this promise, existing guide–core and advisor-style instantiations rarely train the guide for the quantity that matters in GCoP: executability. Prompted or supervised guides can be informative yet underspecified, while advisor-style models typically optimize advice quality in isolation rather than the downstream constraints imposed by a smaller, cost-constrained core. Consequently, they often generate strategies that are difficult to parse, hard to follow, or mismatched to the core’s capabilities, leading to brittle behavior and wasted inference (see \autoref{app:qualitative_analysis}).

Guided by our theoretical insights, we propose \textsc{ExecTune}, a principled training recipe for learning effective executable guides. \textsc{ExecTune} proceeds in two stages. First, it initializes the guide using teacher-guided acceptance sampling and supervised fine-tuning to bias generation toward executable strategies. Second, it performs structure-aware reinforcement learning with explicit rewards for syntactic validity, execution success, and cost-efficient behavior, while penalizing brittle or unparseable outputs. This design directly aligns guide training with the utility objective induced by GCoP. \looseness=-1

\begin{mainbox}

\begin{enumerate}
    \item \textbf{\GCoP~framework.} We formalize \textbf{\GCoP}\ as a composed guide-core policy abstraction, where a trainable guide model generates context that a black-box core executes. Our theory building on net-utility objective identifies \emph{guide-averaged executability} as the key factor controlling the value gap to stronger core-only baselines.

    \item \textbf{Teacher-guided acceptance sampling for strategy SFT.}
    We construct high-executability strategy data by using a strong teacher to propose strategies and the target core to validate them, yielding an SFT initialization tailored to the core's execution constraints (\autoref{sec:accepted_target_main}).

    \item \textbf{\textsc{ExecTune}: executable strategy post-training.}
    We propose \textsc{ExecTune}, which refines the guide with structure-aware GRPO using rewards for (i) correct strategy format/parseability and (ii) execution success, and penalties for missing, malformed, or digressive strategies (\autoref{sec:exectune_main}).

    \item \textbf{Improved utility at lower costs.} Across math and code benchmarks, \GCoP\ with \textsc{ExecTune} improves accuracy by up to \textbf{9.2\%} while reducing inference cost by up to \textbf{22.4\%} over prior black-box steering baselines, and approaches next-generation cores (within \textbf{1.7\%} of Sonnet-4) at \textbf{38\%} lower cost (See~\autoref{fig:reward_cost_tradeoff}).
\end{enumerate}
\end{mainbox}

Beyond efficiency, \GCoP~enables modular adaptation: guides can be updated for domain adaptation, continual learning, or targeted unlearning without retraining the core. This modularity is particularly valuable in black-box deployment settings, where the core model cannot be modified directly.

\begin{figure}[h]
    \centering
    \includegraphics[width=1\columnwidth]{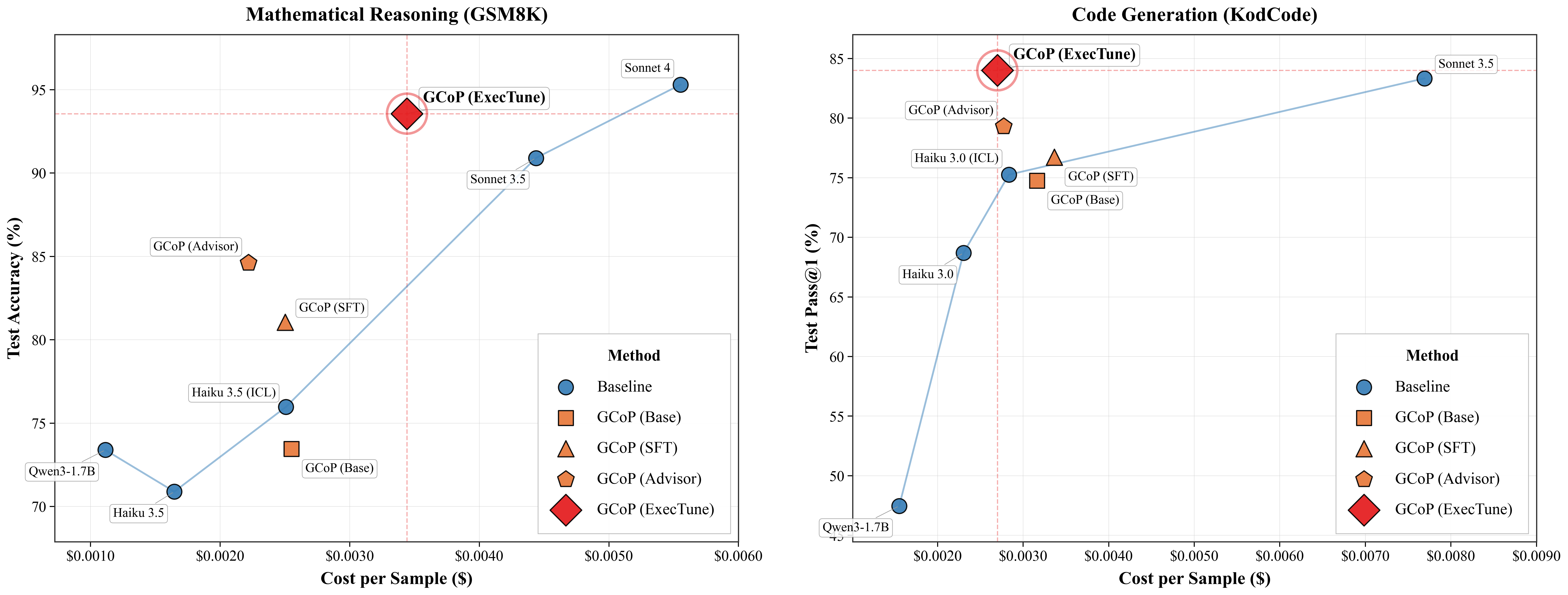}
    \caption{\textbf{Reward--cost trade-off.} Test performance versus \emph{total inference cost} on GSM8K (left; core=\textbf{Haiku-3.5}) and KodCode (right; core=\textbf{Haiku-3.0}). A small \textbf{ExecTune} guide (\textbf{Qwen3-1.7B}) yields the best accuracy--cost trade-off among \GCoP{} variants, outperforming prompting/ICL and \GCoP(Base/SFT/Advisor) while remaining far cheaper than frontier baselines. \GCoP(ExecTune) matches or exceeds \textbf{Sonnet-3.5} and approaches \textbf{Sonnet-4} at substantially lower cost, showing that executable strategies can reliably steer black-box cores toward large-model performance.}
    \label{fig:reward_cost_tradeoff}
\end{figure}

\section{Related Work}
\label{sec:related_work}
Recent advances in large language models (LLMs) have sparked growing interest in methods that adapt black-box APIs to user needs without requiring model retraining or internal access. Our work intersects with three key research threads: (1) prompt optimization for black-box LLMs, (2) learned steering via guide models and controllers, and (3) knowledge distillation and cost-efficient deployment strategies. This section surveys each area and situates our framework in the broader landscape.

\subsection{Prompt Optimization for Black-Box LLMs}
Prompt optimization methods aim to elicit desirable behaviors from LLMs through natural language instructions, without modifying model parameters. Early strategies included gradient-free techniques like AutoPrompt~\citep{shin2020autoprompt}, which searched for discrete token triggers. Black-box-compatible methods have since emerged, including evolutionary and bandit-style approaches such as PromptWizard~\citep{agarwal2024promptwizard}, PromptBreeder~\citep{fernando2023promptbreeder}, and BBPO~\citep{cheng2024black}, which optimize prompts through trial-and-error. GEPA~\citep{agrawal2025gepa}, \textsc{DoT}~\citep{dot} and Self-Refine~\citep{madaan2023self} introduce reflective or self-critiquing mechanisms, showing that language models themselves can evaluate and improve prompts via feedback loops.

These methods generate task-general or input-agnostic prompts. However, static optimization may fall short for dynamic tasks or personalization. This limitation motivates adaptive prompting techniques that produce input-conditional guidance at inference time.

\subsection{Learned Steering with Guide Models}
Guide models generate instance-specific natural language instructions to steer black-box LLMs. Guide Models~\citep{asawa2025train} and Directional Stimulus Prompting~\citep{li2023guiding} train lightweight controllers to craft tailored prompts, using a combination of supervised learning and reinforcement learning (RL) from the LLM's feedback. Matryoshka Pilot (M-Pilot)~\citep{li2025matryoshka} treats the LLM as an environment, with a learned guide breaking complex tasks into subtasks across multiple turns.

These approaches echo structured prompting techniques like Least-to-Most Prompting~\citep{zhou2022least} and Program-of-Thought prompting~\citep{chen2022program}, which inject intermediate reasoning steps to improve task decomposition. At decoding time, methods like FUDGE~\citep{yang-klein-2021-fudge} and DExperts~\citep{liu-etal-2021-dexperts} steer generation through learned discriminators or ensemble scoring, though they require token-level access. Guide models operate fully at the language level and work even with closed-source APIs. They complement alignment frameworks such as InstructGPT~\citep{ouyang2022training} and Constitutional AI~\citep{bai2022constitutional}, which apply RLHF or AI-based critiques. Prompt rewriting frameworks like BPO~\citep{cheng2024black} further demonstrate that modifying the user's query alone can boost output quality from API-tuned models. Tool-augmented prompting methods such as ReAct~\citep{yao2022react} and Auto-CoT~\citep{zhang2022automatic} highlight how LLMs can dynamically generate intermediate reasoning or query tools to improve response quality.

Our work builds on these principles, using a two-stage approach: an offline supervised step followed by online RL, improving robustness under limited feedback. Unlike prior work, we focus on leveraging learned guides for scale-bridging and cost-sensitive inference.

\subsection{Knowledge Distillation and Cost Efficiency}
To make LLM deployment more affordable, knowledge distillation techniques aim to transfer capabilities from large models to smaller ones. In black-box contexts, this is done using only the teacher's outputs. Step-by-step distillation~\citep{hsieh2023distilling} and Chain-of-Thought distillation~\citep{do2025effectiveness} show that intermediate rationales help teach reasoning to smaller models. Recent approaches like MiniLLM~\citep{gu2023minillm} and Direct Preference Knowledge Distillation (DPKD)~\citep{li2024direct} use reward modeling or reverse-KL objectives to preserve quality and stylistic traits. However, distillation may strip away safety mechanisms: Jahan and Sun~\citep{jahan2025black} showed that models cloned from API outputs can retain task ability while losing alignment safeguards. To reduce inference cost without retraining, system-level methods route queries through model cascades. FrugalGPT~\citep{chen2023frugalgpt} and ABC~\citep{kolawole2024agreement} dynamically assign tasks to models of varying size based on confidence or agreement. Tool-augmented prompting complements these strategies. Toolformer~\citep{schick2023toolformer} and ReAct~\citep{yao2022react} teach LLMs to call APIs mid-generation, enhancing factuality and reasoning.

Our method aligns with these directions by training a guide that enables a compact LLM to benefit from the reasoning capacity of a larger one without direct distillation. The result is a cost-effective, steerable deployment strategy that preserves model modularity and API compatibility.

\section{Method}
\label{sec:method}

\paragraph{Overview.}
We study \emph{Guide-Core Policies} (\GCoP): composed agentic systems where a \emph{trainable guide} (e.g., Qwen3-1.7B~\citep{qwen3}) produces an explicit, \emph{parseable} strategy and a (typically cheaper) \emph{black-box core} (e.g., Claude Haiku 3\footnote{https://www.anthropic.com/news/claude-3-haiku}) executes conditioned on it. Our goal is to understand when such composed systems can match strong but expensive large-model baselines under realistic deployment constraints, and how to train guides whose strategies are reliably executable by the target core model.
This section (i) formalizes \GCoP\ under a cost-sensitive net-utility objective, (ii) identifies \emph{guide-averaged executability} as the primary bottleneck governing the value gap to large-model baselines, and (iii) introduces \textsc{ExecTune}, a training recipe that explicitly aligns guide outputs with downstream execution constraints.

\subsection{Objective: reward--cost net utility}
\label{sec:method_objective}
We model tasks as finite-horizon interactions with reward bounded by $R_{\max}$, and measure inference-time deployment cost (tokens, latency, or monetary) via $T_\pi(\cdot)$.
For any policy $\pi$ and initial state $s_0$, we optimize the net utility
\begin{equation}
\label{eq:reward_cost_objective_main}
\boxed{
J_\pi(s_0) \;\triangleq\; V_H^\pi(s_0) \;-\; \lambda\, T_\pi(s_0),
}
\end{equation}
where $V_H^\pi(s_0) \triangleq \E[\sum_{t=0}^{H-1} r_t \mid s_0,\pi]$ and $\lambda \ge 0$ trades off performance and compute.
This objective captures deployment regimes where small gains in accuracy must be weighed against recurring inference costs.

\subsection{\GCoP~as a composed policy family}
\label{sec:gcop_family_main}
Let $\mathcal{Z}$ denote a \emph{strategy} space (e.g., plan, advice, program sketch, structured prompt).
A guide policy $\pi_g(z\mid s)$ samples a strategy $z\in\mathcal{Z}$ given state $s$, and a black-box core $\pi_c(a\mid s,z)$ produces the final action/output conditioned on $(s,z)$.
Composing guide and core yields the induced policy
\begin{equation}
\label{eq:composite_policy_main}
\boxed{
\pi_{gc}(a\mid s)\;\triangleq\;\sum_{z\in\mathcal{Z}} \pi_g(z\mid s)\,\pi_c(a\mid s,z).
}
\end{equation}
Different \GCoP\ instantiations correspond to different ways of obtaining $\pi_g$ (Base, SFT, advisor-style, or ours), while keeping the target black-box core model fixed.

\subsection{Executability controls the value gap}
\label{sec:executability_main}
A central failure mode in guide--executor systems is \emph{non-executability}: the guide outputs strategies that are not parseable or not faithfully followable by the target core, leading to wasted compute and brittle behavior.
We formalize this via a per-(state,strategy) success probability $q(s,z)\triangleq \Pr(G=1\mid s,z)$, where $G=1$ denotes ``good execution'' under the environment/validator.

We define the \emph{guide-averaged executability} as:
\begin{equation}
\label{eq:alpha_def_main}
\boxed{
\alpha(s)\;\triangleq\;\E_{z\sim \pi_g(\cdot\mid s)}[q(s,z)].
}
\end{equation}
Under a student--teacher mixture view (detailed in Appendix~\ref{app:student_teacher_mixture}), the induced policy $\pi_{gc}$ can be expressed as a mixture between a strong teacher policy $\pi_L$ and an aggregate ``bad-execution'' component, weighted by $\alpha(s)$.
This yields the following bound on the value gap:
\begin{theorem}[Value gap controlled by executability]
\label{thm:value_gap_alpha_main}
Under the mixture model assumptions in Appendix~\ref{app:student_teacher_mixture}, for any $s_0$,
\begin{equation}
\label{eq:value_gap_thm_main}
\boxed{
V_H^{\pi_L}(s_0)-V_H^{\pi_{gc}}(s_0)\;\le\;2HR_{\max}\,\E_{s\sim d_L}\!\big[1-\alpha(s)\big],
}
\end{equation}
where $d_L$ is the (average) state visitation distribution under $\pi_L$.
\end{theorem}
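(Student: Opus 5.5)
The plan is to combine the mixture decomposition of the induced policy with a finite-horizon performance-difference (simulation) argument. I read the appendix assumptions as saying two things: the return of any trajectory lies in $[0,R_{\max}]$ (an outcome-style success reward, as in the math and code tasks), and the per-state mixture is exact. The first point matters. If $R_{\max}$ only bounded each per-step reward, the same argument would give an extra factor of $H$, because $Q$-values would be bounded by $(H-t)R_{\max}$. So the first step is to write down the bounded-return reading explicitly; it is the hypothesis that makes the stated factor $2HR_{\max}$, rather than $2H^2R_{\max}$, come out.

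\textbf{Step 1 (mixture and per-state TV).} By the mixture model of Appendix~\ref{app:student_teacher_mixture}, for every $s$ we have $\pi_{gc}(\cdot\mid s)=\alpha(s)\,\pi_L(\cdot\mid s)+(1-\alpha(s))\,\beta(\cdot\mid s)$, where $\beta$ is the aggregate bad-execution component. This comes from splitting $\sum_z\pi_g(z\mid s)\pi_c(a\mid s,z)$ according to $G\in\{0,1\}$, with the good-execution conditional identified with $\pi_L$, and then averaging $q(s,z)$ over $z$ to obtain $\alpha(s)$. Subtracting $\pi_L$ gives $\pi_L-\pi_{gc}=(1-\alpha(s))(\pi_L-\beta)$. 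Hence $\|\pi_L(\cdot\mid s)-\pi_{gc}(\cdot\mid s)\|_1\le 2(1-\alpha(s))$.

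\textbf{Step 2 (performance difference).} Use the time-indexed identity
\[
V_H^{\pi_L}(s_0)-V_H^{\pi_{gc}}(s_0)=\sum_{t=0}^{H-1}\E_{s_t\sim d^L_t}\Big[\textstyle\sum_a\big(\pi_L(a\mid s_t)-\pi_{gc}(a\mid s_t)\big)\,Q^{\pi_{gc}}_t(s_t,a)\Big].
\]
It is proved by telescoping the hybrid policies that follow $\pi_L$ up to time $t$ and $\pi_{gc}$ afterwards. Under the bounded-return assumption, $0\le Q^{\pi_{gc}}_t\le R_{\max}$. Each inner term is therefore at most $\|\pi_L-\pi_{gc}\|_1\,R_{\max}\le 2R_{\max}(1-\alpha(s_t))$, using the crude sup-norm bound, which is where the constant $2$ comes from.

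\textbf{Step 3 (average visitation).} Summing over $t$ gives $2R_{\max}\sum_t\E_{d^L_t}[1-\alpha]$. With $d_L\triangleq\frac1H\sum_t d_t^L$, this sum equals $2HR_{\max}\,\E_{s\sim d_L}[1-\alpha(s)]$, which is exactly the statement.

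I expect the main obstacle to be Step 1 and the assumption bookkeeping around it. One must check that the good-execution conditional really is $\pi_L$ at every visited state, which is what the mixture assumption posits. One must also confirm that the appendix's reward convention bounds the whole return by $R_{\max}$, so that $Q_t$ is uniformly bounded by $R_{\max}$ and not by $(H-t)R_{\max}$. If the appendix instead only assumes per-step rewards in $[0,R_{\max}/H]$, I would rescale accordingly, and the same chain goes through unchanged.
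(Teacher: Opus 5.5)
Your proof follows the paper's route. The mixture identity gives $\TV(\pi_{gc}(\cdot\mid s),\pi_L(\cdot\mid s))\le 1-\alpha(s)$, a finite-horizon simulation bound turns this into a value gap, and averaging over $t$ produces $d_L$. The only difference is that you derive the simulation step from the performance-difference identity, whereas the paper cites it as ``standard'' for ``rewards in $[0,R_{\max}]$.''

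Your caveat about the reward scale is the important part, and it exposes a defect in the paper rather than in your argument. Appendix~\ref{app:preliminaries} takes \emph{per-step} rewards $r(s_t,a_t)\in[0,R_{\max}]$. Under that convention the cited bound with factor $2HR_{\max}$, and with it the theorem, fails for every $H\ge 3$. Here is a counterexample:
\begin{itemize}
\item Give $s_0$ two actions. The ``good'' action pays $R_{\max}$ and enters a chain of fresh states where every action pays $R_{\max}$. The ``bad'' action pays $0$ and enters an absorbing zero-reward state. Let $\pi_L$ take the good action deterministically.
\item At $s_0$, let the core take the good action with probability $\alpha_0<1$ regardless of $z$. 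Set $G=1$ iff the good action is taken, so $q(s_0,\cdot)\equiv\alpha_0$ and teacher alignment holds.
\item At every other state, let the core reproduce $\pi_L$ with $q\equiv 1$.
\end{itemize}
Then $V_H^{\pi_L}(s_0)-V_H^{\pi_{gc}}(s_0)=(1-\alpha_0)HR_{\max}$. Meanwhile $\E_{s\sim d_L}[1-\alpha(s)]=(1-\alpha_0)/H$, so the claimed right-hand side is only $2(1-\alpha_0)R_{\max}$.

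Under per-step bounds, your own chain gives the correct statement. Use $Q_t\le (H-t)R_{\max}$ and subtract a constant from $Q_t$, which is allowed since $\sum_a(\pi_L-\pi_{gc})(a\mid s)=0$. This yields $R_{\max}\sum_t (H-t)\,\E_{s\sim d^L_t}[1-\alpha(s)]\le H^2R_{\max}\,\E_{s\sim d_L}[1-\alpha(s)]$, and the example shows this is tight.

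So your bounded-total-return hypothesis is exactly what the stated constant $2HR_{\max}$ requires. It is also natural for the single-shot, outcome-reward math and code tasks, where effectively $H=1$. Under that hypothesis your proof is complete. Centering $Q_t$ at $R_{\max}/2$ would even sharpen $2HR_{\max}$ to $HR_{\max}$.
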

Thus, to close the gap to strong cores, the guide must \emph{increase $\alpha(s)$ on task-relevant states}, i.e., produce strategies that the target core can reliably execute.

\subsection{Accepted-strategy target distribution for strategy SFT}
\label{sec:accepted_target_main}
To increase $\alpha(s)$, we build a strategy dataset that matches the target core's execution constraints.
We use \emph{teacher-guided acceptance sampling}: a strong teacher proposes strategies $z$, and we \emph{accept} those for which the target core (or validator) achieves high empirical success when executing $z$.
This defines an \emph{accepted-strategy} distribution $\pi_{\mathrm{acc}}(\cdot\mid s)$ (see Appendix~\ref{app:accepted_sampling}).

Acceptance sampling provably reweights toward high-executability strategies:
\begin{theorem}[Accepted strategies have high expected success]
\label{thm:accepted_high_q_main}
Fix $s$ and accept $z$ iff $\hat q_K(s,z)\ge \tau$ using $K$ validation trials. Then for any $\eta>0$,
\begin{equation}
\label{eq:accepted_q_lower_bound_main}
\boxed{
\E_{z\sim \pi_{\mathrm{acc}}(\cdot\mid s)}[q(s,z)]
\;\ge\;
(\tau-\eta)\Big(1-\frac{e^{-2K\eta^2}}{A_s}\Big),
}
\end{equation}
where $A_s$ is the acceptance rate.
\end{theorem}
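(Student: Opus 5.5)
The plan is to write the accepted distribution explicitly as a conditioned teacher distribution and then split its mass into strategies whose true success probability is at least $\tau-\eta$ and those below it. Let $\pi_T(\cdot\mid s)$ be the teacher's proposal distribution. The $K$ validation trials are assumed i.i.d.\ Bernoulli$(q(s,z))$ given $(s,z)$. Acceptance is the event $E=\{\hat q_K(s,z)\ge\tau\}$, so
\[
\pi_{\mathrm{acc}}(z\mid s)=\frac{\pi_T(z\mid s)\,\Pr(E\mid s,z)}{A_s},
\qquad
A_s=\E_{z\sim\pi_T}\!\big[\Pr(E\mid s,z)\big].
\]
Equivalently, $\pi_{\mathrm{acc}}$ is the law of $z$ conditioned on $E$ under the joint draw of $z$ and the trials. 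I will work with this joint probability space throughout.

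Define the bad set $B=\{z: q(s,z)<\tau-\eta\}$. Since $q\ge 0$, the first step is
\[
\E_{\pi_{\mathrm{acc}}}[q]\;\ge\;(\tau-\eta)\,\Pr(z\notin B\mid E)\;=\;(\tau-\eta)\big(1-\Pr(z\in B\mid E)\big).
\]
It therefore suffices to show $\Pr(z\in B\mid E)\le e^{-2K\eta^2}/A_s$. By Bayes,
\[
\Pr(z\in B\mid E)=\frac{\Pr(z\in B,\,E)}{A_s}=\frac{1}{A_s}\,\E_{z\sim\pi_T}\!\big[\mathbf 1\{z\in B\}\Pr(E\mid s,z)\big].
\]

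The key step is a uniform concentration bound on the integrand. For $z\in B$, acceptance requires $\hat q_K-q(s,z)\ge \tau-q(s,z)>\eta$. Hoeffding's inequality for the mean of $K$ bounded $[0,1]$ variables then gives $\Pr(E\mid s,z)\le e^{-2K\eta^2}$. Bounding $\E[\mathbf 1\{B\}\cdot]$ by $\pi_T(B)\,e^{-2K\eta^2}\le e^{-2K\eta^2}$ yields the claim.

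The main obstacle is modelling rather than computation. The bound only holds if the validation trials are conditionally independent given $(s,z)$ with mean exactly $q(s,z)$, i.e.\ the same success notion $G$ used in the definition of $\alpha(s)$. It also needs the accepted distribution to be exactly the conditioned law above, not something further reweighted. I will state these as the standing assumptions from Appendix~\ref{app:accepted_sampling}. A minor point is that the bound is vacuous when $A_s<e^{-2K\eta^2}$; this is harmless, since the right-hand side is then negative and the inequality holds trivially.
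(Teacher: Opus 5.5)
Your argument is the paper's own argument. It uses the same bad set, the same per-strategy Hoeffding bound $\Pr(E\mid s,z)\le e^{-2K\eta^2}$ on that set, the same division by $A_s$, and the same lower bound $q\ge\tau-\eta$ on the complement. The only differences are immaterial: you use the strict inequality $q<\tau-\eta$ where the paper uses $q\le\tau-\eta$, and your conditioning-on-$E$ formulation is just a cleaner way of writing $\pi_{\mathrm{acc}}(z\mid s)\propto\pi_L(z\mid s)\Pr(\text{accept}\mid s,z)$.

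The step that fails as written is the reduction ``it therefore suffices to show $\Pr(z\in B\mid E)\le e^{-2K\eta^2}/A_s$.'' Passing from $1-\Pr(z\in B\mid E)\ge 1-e^{-2K\eta^2}/A_s$ to that inequality multiplied by $\tau-\eta$ requires $\tau-\eta\ge 0$. The hypothesis ``for any $\eta>0$'' does not guarantee this. Your closing remark has the same blind spot. When $A_s<e^{-2K\eta^2}$, the right-hand side is negative only if $\eta<\tau$. If $\eta>\tau$, it is a product of two negative numbers and hence positive.

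This is not just a presentational gap, because in that regime the inequality is false. Take $K=1$, $\tau=0.1$, and $q(s,z)\equiv 0.01$ for every $z$.
\begin{itemize}
\item A strategy is accepted iff its single trial succeeds, so $A_s=0.01$.
\item $\pi_{\mathrm{acc}}$ equals the teacher distribution, so $\mathbb{E}_{z\sim\pi_{\mathrm{acc}}}[q(s,z)]=0.01$.
\item With $\eta=0.2$, $e^{-2K\eta^2}=e^{-0.08}\approx 0.923$, so the right-hand side is $(-0.1)(1-92.3)\approx 9.1>1$.
\end{itemize}
The paper's proof makes exactly the same multiplication in its final line, so the defect is in the statement, not in your method. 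Restrict to $0<\eta\le\tau$, or replace $\tau-\eta$ by $\max\{\tau-\eta,0\}$. With that change your proof is complete, including the observation that the bound holds trivially when $A_s<e^{-2K\eta^2}$.
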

We then initialize the guide by SFT on $(s,z)\sim \pi_{\mathrm{acc}}(\cdot\mid s)$, which directly trains the guide toward strategies the core can execute.

\begin{figure}[t]
    \centering
    \includegraphics[width=0.95\columnwidth]{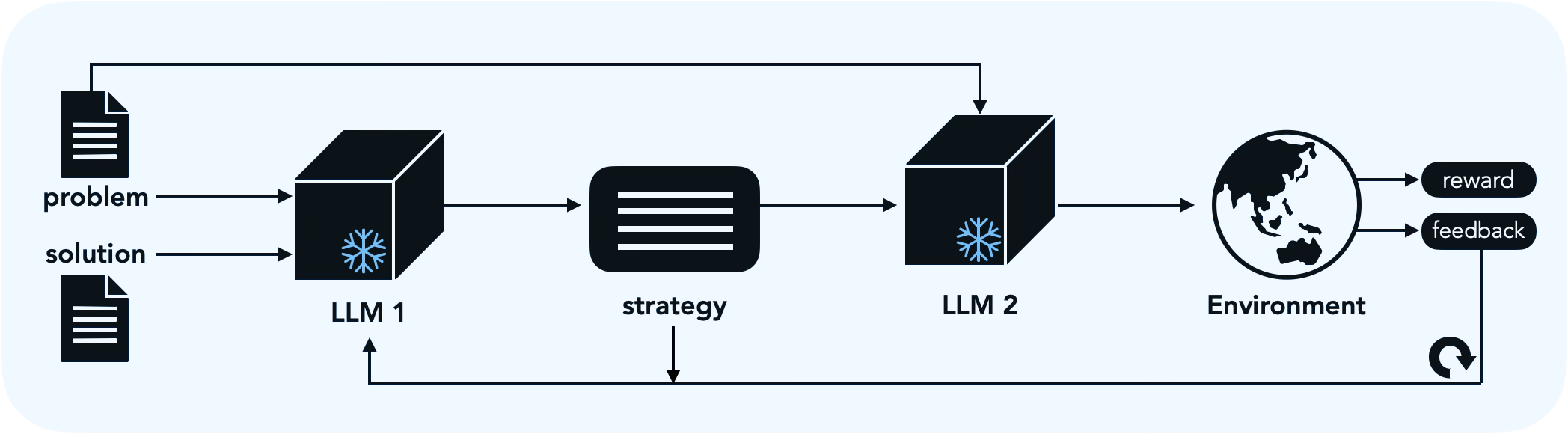}
    \caption{SFT dataset curation pipeline. A strong model (LLM-1; e.g., Claude Sonnet 4.5) extracts a high-level strategy from a (problem, solution) pair. A weaker frontier model (LLM-2; e.g., Claude Haiku 3) generates a solution conditioned on the problem and strategy. An environment provides reward and feedback; unsuccessful strategies are iteratively refined by LLM-1. The resulting (problem, strategy) pairs form the SFT corpus.}
    \label{fig:sft_recipe}
\end{figure}

\subsection{\textsc{ExecTune}: executable guide post-training}
\label{sec:exectune_main}

We propose \textsc{ExecTune}, a two-stage recipe that aligns guide generation with downstream executability and net utility.

\paragraph{Stage 1: teacher-guided acceptance sampling + SFT.}
We curate executable strategies using the pipeline in Fig.~\ref{fig:sft_recipe} and fine-tune the guide to imitate accepted strategies, improving $\alpha(s)$ by construction.

\paragraph{Stage 2: structure-aware GRPO with executability shaping.}
We further refine the guide with GRPO~\citep{grpo} algorithm using a shaped reward that enforces a structured guide--core interface, penalizes leakage of final answers into the strategy, and discourages strategies that harm core behavior.

\textbf{Structure validity.} We require the guide to emit a well-formed \texttt{<strategy>}...\texttt{</strategy>} block that is parseable by a deterministic checker. Let
\[
\mathbb{I}_{\mathrm{str}} \triangleq \mathbf{1}\{\texttt{<strategy>} \text{ block is present and well-formed}\}.
\]

\textbf{Judge-based strategy quality and non-leakage.}
Let $J(s,z)\in[0,1]$ be an LLM-as-a-Judge score applied to the \emph{parsed} strategy, capturing strategy quality and verifying that it does not contain the final answer (e.g., full code). We treat $J$ as an additional shaping signal.

\textbf{No-negative-behavior shaping.}
Let $y$ denote the final output produced by the core. We define a \emph{non-degradation} term that penalizes strategies that reduce performance relative to running the core without the guide:
\begin{equation}
\label{eq:no_negative_behavior_main}
\Delta R(s,z)
\;\triangleq\;
R\!\big(\pi_c(\cdot \mid s,z)\big)\;-\;R\!\big(\pi_c(\cdot \mid s)\big),
\end{equation}
where $R(\cdot)$ denotes the episodic task reward induced by the corresponding core execution distribution.\footnote{In deterministic settings, $R(\pi_c(\cdot\mid \cdot))$ is simply the reward of the realized output $y$.}

\textbf{Final shaped reward.}
Combining these components, we train the guide using the shaped reward
\begin{equation}
\label{eq:exectune_reward_main}
\boxed{
\tilde R
\;\triangleq\;
\mathbb{I}_{\mathrm{str}}
\Big(
R \;+\; \beta
\;+\; \gamma\, J(s,z)
\;-\; \kappa\,\big[-\Delta R(s,z)\big]_+
\Big)
}
\end{equation}
where $\beta\ge 0$ is a small structural bonus, $\gamma\ge 0$ weights the judge score, $\kappa\ge 0$ penalizes degradations, and $[x]_+\triangleq \max\{x,0\}$.
Thus malformed strategies receive $\tilde R=0$ even if the final answer is correct; valid strategies are rewarded for execution success, high-quality non-leaky guidance, and are discouraged from harming baseline core behavior.
Full judge design and additional implementation details are deferred to Appendix~\ref{app:exectune_details}.

\paragraph{Instantiations.}
Our framework subsumes common guide training choices:
(i) \textbf{\GCoP(Base)} (prompted guide; no fine-tuning), (ii) \textbf{\GCoP(SFT)} (guide fine-tuned on curated strategies dataset), (iii) \textbf{\GCoP(\textsc{Advisor})} (guide trained following the recipe in~\citep{asawa2025train}), and (iv) \textbf{\GCoP(\textsc{ExecTune})} (ours), which explicitly optimizes executability under the deployment utility objective in Eq.~\eqref{eq:reward_cost_objective_main}.

\section{Experiments}
\label{sec:experiments}

In this section, we describe the models, datasets, and evaluation protocol used to instantiate \GCoP\ in black-box settings, and present results across mathematical reasoning and code generation.
Implementation and prompt details are deferred to~\cref{app:implementation,app:prompt}.

\paragraph{Training datasets.}
For code generation, we use KodCode~\citep{kodcode}. Following~\citet{murphy}, we randomly sample 1{,}000 problems from KodCode and split them into train/test with an 80:20 ratio. For mathematical reasoning, we train on the GSM8K~\citep{gsm8k} training split.\looseness=-1

\paragraph{Evaluation datasets.}
Guides trained on KodCode are evaluated on the KodCode test split and HumanEval~\citep{humaneval} (out-of-domain code). For mathematical reasoning, we evaluate on the GSM8K test split.

\paragraph{Models and nomenclature.}
We evaluate \GCoP\ with \emph{black-box cores} accessed via proprietary APIs and \emph{open-weight guides}.
We use \textbf{Claude 3.5 Haiku} as the core for GSM8K and \textbf{Claude 3 Haiku\footnote{https://www.anthropic.com/news/claude-3-haiku}} as the core for code tasks (KodCode/HumanEval), chosen because they are cost-efficient yet leave room for improvement.
We report \textbf{Claude 3.5 Sonnet\footnote{https://www.anthropic.com/news/claude-3-5-sonnet}} as a stronger core-only reference point. For guides, we experiment with \textbf{Qwen3-1.7B}.

\paragraph{Baselines and variants.}
We compare against:
(i)~\textbf{ICL} (core-only), where we retrieve the three nearest (problem, solution) training pairs via cosine similarity over prompt embeddings obtained using Cohere Embed v4 model\footnote{https://cohere.com/blog/embed-4} and prepend them to the core prompt;
(ii)~\textbf{Advisor-models}~\citep{asawa2025train}, denoted \textsc{Advisor} in tables, which trains the guide with RL to provide helpful advice to a fixed black-box core.

In \textbf{guided generation} mode, the guide first emits an explicit, parseable strategy $z$, and the core produces the final answer conditioned on both the original problem and $z$.
We evaluate the following guide training recipes (FT Recipe in tables):
\begin{itemize}
    \item \textbf{None}: zero-shot prompted guide (no fine-tuning).
    \item \textbf{SFT}: supervised fine-tuning on accepted strategy data (\autoref{sec:accepted_target_main}).
    \item \textbf{\textsc{Advisor}}: advisor-style RL~\citep{asawa2025train}.
    \item \textbf{\textsc{ExecTune} (ours)}: SFT initialization followed by structure-aware GRPO (\autoref{sec:exectune_main}).
\end{itemize}

\paragraph{Metrics.}
We report Pass@1 for code generation and exact-match accuracy for GSM8K.
Each results table is split into two blocks: (i)~\textit{core-only baselines} (no guide), and (ii)~\textit{guided generation} with the specified guide recipe.

\subsection{Mathematical Reasoning (GSM8K)}
\label{subsec:math_reasoning}

\begin{figure*}[h]
\vspace{-2mm}
\begin{minipage}[t]{0.55\textwidth}
We instantiate \GCoP\ on GSM8K using \textbf{Claude 3.5 Haiku} as the black-box core and \textbf{Qwen3-1.7B} as the guide.
For context, we additionally report the standalone performance of Qwen3-1.7B and Claude 3.5 Sonnet under standard prompting.

\paragraph{SFT strategy data construction.}
We construct the SFT corpus using teacher-guided acceptance sampling (Fig.~\ref{fig:sft_recipe}).
A strong teacher (by default \textbf{Claude Sonnet 4.5}) proposes an initial strategy.
We concatenate the strategy with the original problem and query the \emph{target core} (\textbf{Claude 3.5 Haiku}) to generate a solution, scored by exact-match on GSM8K.
If the strategy fails, the teacher refines it using feedback and we repeat for a small number of iterations; accepted $(\text{problem},\text{strategy})$ pairs form the SFT corpus.
We also ablate different teachers (e.g., Sonnet 4, Sonnet 3.5) while keeping the target core fixed; Fig.~\ref{fig:sft-passk} summarizes the effect of teacher choice and refinement iterations.
Prompts are provided in Appendix~\ref{sec:sft_prompts}.\looseness=-1
\end{minipage}\hfill
\begin{minipage}[t]{0.42\textwidth}
\vspace{-2mm}
\centering
\includegraphics[width=\linewidth]{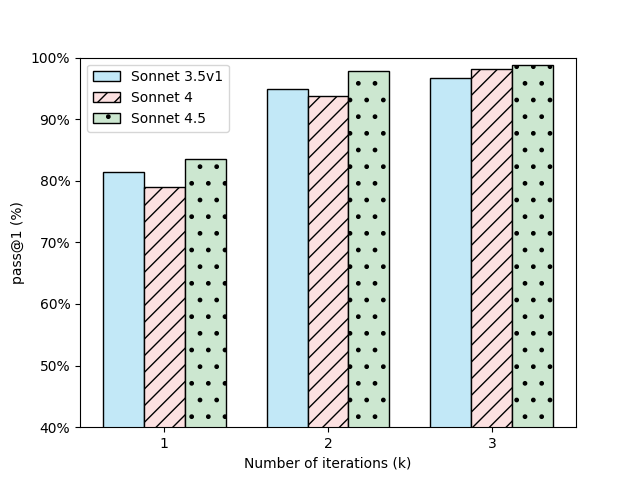}
\caption{Effect of iterative strategy refinement during acceptance sampling: additional refinement iterations increase the probability that the target core solves the problem when conditioned on the proposed strategy.\looseness=-1}
\label{fig:sft-passk}
\end{minipage}
\vspace{-2mm}
\end{figure*}

\paragraph{Results.}
\autoref{tab:gsm8k_results} reports GSM8K accuracy.
First, core-only ICL improves Claude 3.5 Haiku by \textbf{+5.08} points (70.89\%$\rightarrow$75.97\%).
Second, using an unfine-tuned guide yields only a modest gain (70.89\%$\rightarrow$73.46\%), indicating that naive strategies are not consistently executable by the target core.
Third, both SFT and \textsc{Advisor} substantially improve guided performance, with SFT being a strong and sample-efficient baseline.
Finally, our \textsc{ExecTune} recipe achieves the best performance: \textbf{93.56\%} accuracy, improving over \textsc{Advisor} by \textbf{+8.95} points (84.61\%$\rightarrow$93.56\%) and over Claude 3.5 Haiku core-only by \textbf{+22.67} points (70.89\%$\rightarrow$93.56\%).
These gains support our thesis that explicitly optimizing the guide for executability and downstream success is critical in black-box settings.

\begin{table*}[!htbp]
\centering
\caption{Performance on math and code benchmarks. Guided generation with \textsc{ExecTune} achieves the strongest results, demonstrating that optimizing guides for executability improves black-box core performance. FT Recipe denotes the guide fine-tuning recipe.}
\label{tab:main_results}

\begin{subtable}[t]{0.48\textwidth}
\centering
\caption{\textbf{GSM8K (Math)}. \textsc{ExecTune} achieves 93.56\% accuracy; \textbf{+22.67} points over Claude 3.5 Haiku alone.}
\label{tab:gsm8k_results}
\vspace{5pt}
\resizebox{\linewidth}{!}{
\begin{tabular}{@{}cccc@{}}
\toprule
\textbf{Core (API)} & \textbf{Guide} & \textbf{FT Recipe} & \textbf{Acc.} \\
\midrule
\multicolumn{4}{@{}c}{\textit{Baselines (No Guided Generation)}} \\[2pt]
\midrule
Qwen3-1.7B          & —           & —           & 73.39\% \\
Claude 3.5 Haiku    & —           & —           & 70.89\% \\
Claude 3.5 Haiku (ICL) & —        & —           & 75.97\% \\
Claude 3.5 Sonnet   & —           & —           & 90.90\% \\
\midrule
\multicolumn{4}{@{}c}{\textit{Guided Generation (Core = Claude 3.5 Haiku)}} \\[2pt]
\midrule
Claude 3.5 Haiku    & Qwen3-1.7B  & None        & 73.46\% \\
Claude 3.5 Haiku    & Qwen3-1.7B  & SFT         & 81.05\% \\
Claude 3.5 Haiku    & Qwen3-1.7B  & \textsc{Advisor} & 84.61\% \\
Claude 3.5 Haiku    & Qwen3-1.7B  & \textsc{ExecTune} & \textbf{93.56\%} \\
\bottomrule
\end{tabular}
}
\end{subtable}
\hfill
\begin{subtable}[t]{0.48\textwidth}
\centering
\caption{\textbf{KodCode and HumanEval (Code)}. \textsc{ExecTune} enables Claude 3 Haiku to surpass Claude 3.5 Sonnet on KodCode (\textbf{84.00\%} vs 83.33\%) and HumanEval (\textbf{91.46\%} vs 84.15\%).}
\label{tab:kodcode_results}
\vspace{5pt}
\resizebox{\linewidth}{!}{
\begin{tabular}{@{}ccccc@{}}
\toprule
\textbf{Core (API)} & \textbf{Guide} & \textbf{FT Recipe} & \makecell{\textbf{KodCode} \\ \textbf{Pass@1}} & \makecell{\textbf{HumanEval} \\ \textbf{Pass@1}} \\
\midrule
\multicolumn{5}{@{}c}{\textit{Baselines (No Guided Generation)}} \\[2pt]
\midrule
Qwen3-1.7B        & —           & —           & 47.47\% & 71.39\% \\
Claude 3 Haiku    & —           & —           & 68.69\% & 73.17\% \\
Claude 3 Haiku (ICL) & —        & —           & 75.25\% & 68.67\% \\
Claude 3.5 Sonnet & —           & —           & 83.33\% & 84.15\% \\
\midrule
\multicolumn{5}{@{}c}{\textit{Guided Generation (Core = Claude 3 Haiku)}} \\[2pt]
\midrule
Claude 3 Haiku    & Qwen3-1.7B  & None        & 72.56\% & 79.27\% \\
Claude 3 Haiku    & Qwen3-1.7B  & SFT         & 76.73\% & 84.15\% \\
Claude 3 Haiku    & Qwen3-1.7B  & \textsc{Advisor} & 79.29\% & 79.88\% \\
Claude 3 Haiku    & Qwen3-1.7B  & \textsc{ExecTune} & \textbf{84.00\%} & \textbf{91.46\%} \\
\bottomrule
\end{tabular}
}
\end{subtable}

\end{table*}

\subsection{Code Generation (KodCode, HumanEval)}
\label{subsec:code_reasoning}

We train code guides on KodCode-train using the previously introduced recipes (None/SFT/\textsc{Advisor}/\textsc{ExecTune}).
Strategy SFT data construction follows the same acceptance-sampling template as GSM8K, except that success is defined by \emph{unit-test execution}: a proposed strategy is accepted if the target core solution passes the associated tests.
Unless otherwise noted, the black-box core for code is \textbf{Claude 3 Haiku}.

\paragraph{Results on KodCode and HumanEval.}
\autoref{tab:kodcode_results} reports KodCode (in-domain) and HumanEval (out-of-domain) Pass@1.
Core-only ICL substantially improves KodCode (+6.56 points) but \emph{hurts} HumanEval (-4.50 points), consistent with retrieval overfitting to the KodCode training distribution.
Guided generation with an unfine-tuned guide already yields gains over core-only prompting (e.g., +3.87 points on KodCode and +6.10 on HumanEval), but fine-tuning is crucial for robust improvements.
SFT provides a strong baseline and matches the stronger core (Claude 3.5 Sonnet) on HumanEval (84.15\%).
Advisor-style training improves KodCode further but degrades HumanEval relative to SFT, highlighting a generalization trade-off.
Our \textsc{ExecTune} recipe achieves the best overall performance, improving over \textsc{Advisor} by \textbf{+4.71} points on KodCode (79.29\%$\rightarrow$84.00\%) and \textbf{+11.58} points on HumanEval (79.88\%$\rightarrow$91.46\%), and surpasses Claude 3.5 Sonnet on both benchmarks.

\subsection{Qualitative analysis of generated strategies}
\label{sec:qualitative_analysis}

We qualitatively compare strategies from different \GCoP\ guide training recipes on a representative KodCode test problem, keeping the black-box core fixed (\textbf{Claude 3 Haiku}). Base (prompted) and \textsc{Advisor} guides often produce generic or underspecified advice that does not sufficiently constrain the core’s implementation, while SFT guides are typically better structured but can be verbose or present multiple competing directions, leading to inconsistent or incomplete code. In contrast, \textsc{ExecTune} produces the most \emph{executable} strategies that are actionable, interface-aligned, and reliably steer the core to implement the intended solution. Overall, these examples reinforce that gains come from improving \emph{strategy executability} (parseable and faithfully followed plans), not from longer or more elaborate guidance; full prompts and outputs are provided in~\autoref{app:qualitative_analysis}.

\section{Conclusion \& Limitations}
\label{sec:conclusion}
We introduced \GCoP, a unified abstraction for guide-core steering systems under black-box access, and formalized their deployment trade-offs through a cost-sensitive net-utility objective. Our analysis identifies \emph{guide-averaged executability} as the key factor governing the performance gap between composed policies and stronger, more expensive cores. Guided by this perspective, we proposed \textsc{ExecTune}, a two-stage recipe that (i) curates executable strategy supervision via teacher-guided acceptance sampling and (ii) refines guides with structure-aware RL to enforce parseable, reliably executable strategies. Across mathematical reasoning and code generation benchmarks, \textsc{ExecTune} improves accuracy while preserving the cost advantages of cheaper black-box cores, and in several settings approaches or surpasses stronger cores at substantially lower inference cost. \textbf{Limitations:} our experiments focus on single-turn tasks where a single strategy is generated and executed once; we do not evaluate more challenging multi-turn agentic settings that require iterative tool use, long-horizon planning, memory, or recovery from intermediate failures. Extending \GCoP\ and \textsc{ExecTune} to such multi-step environments raises additional questions around strategy updating, credit assignment across turns, and robustness under distribution shift, which we leave to future work.

\newpage

\appendix
\section{Appendix}

\subsection{Method Details}
\label{app:method_details}

This appendix provides the full formalism and details omitted from the condensed~\autoref{sec:method} in the main paper.
We include: (i) the student--teacher mixture model and derivations linking executability to the value gap, (ii) the accepted-strategy target distribution induced by teacher-guided acceptance sampling and its guarantees, and (iii) the full \textsc{ExecTune} post-training procedure, including structure-aware GRPO.

\subsection{Preliminaries: objective and cost}
\label{app:preliminaries}
We model the environment as a finite-horizon controlled process with state space $\mathcal{S}$, action space $\mathcal{A}$, and horizon $H\in\mathbb{N}$.
At each step $t\in\{0,\dots,H-1\}$ the agent observes $s_t\in \mathcal{S}$, selects $a_t\in\mathcal{A}$, receives reward $r_t=r(s_t,a_t)\in[0,R_{\max}]$, and transitions according to the environment dynamics.
An initial state is drawn from $s_0\sim \mu$ (user-query distribution).
For any policy $\pi$, define the $H$-step value
\[
V_H^\pi(s)\triangleq \E\!\left[\sum_{t=0}^{H-1} r(s_t,a_t)\,\middle|\,s_0=s,\pi\right].
\]
Let $T_\pi(s)$ denote the expected total inference-time cost of deploying $\pi$ from $s$ (tokens, latency, or monetary cost).
We optimize the reward--cost tradeoff
\begin{equation}
\label{eq:reward_cost_objective_app}
J_\pi(s)\triangleq V_H^\pi(s)-\lambda T_\pi(s),
\end{equation}
where $\lambda\ge 0$ controls cost sensitivity.

\subsection{\GCoP: Guide--Core policies as a policy family}
\label{app:gcop_family}
Let $\mathcal{Z}$ denote a strategy space (plans, advice, program sketches, structured prompts).
A trainable guide $\pi_g(z\mid s):\mathcal{S}\to \Delta(\mathcal{Z})$ samples a strategy conditioned on state $s$.
A black-box core $\pi_c(a\mid s,z):\mathcal{S}\times\mathcal{Z}\to \Delta(\mathcal{A})$ executes actions conditioned on both $(s,z)$.
Composing guide and core yields the induced action policy
\begin{equation}
\label{eq:composite_policy_app}
\pi_{gc}(a\mid s)\triangleq \sum_{z\in\mathcal{Z}} \pi_g(z\mid s)\,\pi_c(a\mid s,z).
\end{equation}
Throughout, \GCoP\ denotes the \emph{policy family} defined by Eq.~\eqref{eq:composite_policy_app} under the objective Eq.~\eqref{eq:reward_cost_objective_app}, where different instantiations correspond to different guide-training procedures.

\paragraph{When can cheaper \GCoP\ outperform a large model?}
Let $\pi_L$ denote a large and costly baseline policy (e.g., frontier API model).
A sufficient condition for $\pi_{gc}$ to improve net utility over $\pi_L$ from a fixed initial state $s$ is
\begin{equation}
\label{eq:sufficient_condition_state_app}
\lambda T_{\pi_L}(s)-\lambda T_{\pi_{gc}}(s)\;>\;V_H^{\pi_L}(s)-V_H^{\pi_{gc}}(s),
\end{equation}
i.e., the cost advantage must dominate the value gap. The following sections show that the value gap is governed by a single quantity: the guide-averaged executability $\alpha(s)$.

\subsection{Student--teacher mixture model and executability}
\label{app:student_teacher_mixture}

We analyze \GCoP\ as a student system (core) that can match teacher behavior when the guide proposes a \emph{good} (executable) strategy.

\paragraph{Good strategy and success probability.}
Fix a state--strategy pair $(s,z)\in\mathcal{S}\times\mathcal{Z}$.
Let $A\in\mathcal{A}$ be the random action/output produced by the core under $\pi_c(\cdot\mid s,z)$.
Let $G\in\{0,1\}$ indicate whether execution is ``good'' (application-dependent; e.g., passes tests, achieves reward above a threshold).
Define the success probability
\[
q(s,z)\triangleq \Pr(G=1\mid s,z),
\]
and conditional action distributions
$\pi_{\mathrm{good}}(a\mid s,z)\triangleq \Pr(A=a\mid s,z,G=1)$ and
$\pi_{\mathrm{bad}}(a\mid s,z)\triangleq \Pr(A=a\mid s,z,G=0)$.
Then, by total probability,
\begin{equation}
\label{eq:exact_mixture_identity_app}
\pi_c(\cdot\mid s,z)
=
q(s,z)\,\pi_{\mathrm{good}}(\cdot\mid s,z)
+
(1-q(s,z))\,\pi_{\mathrm{bad}}(\cdot\mid s,z).
\end{equation}

\paragraph{Teacher-aligned good execution (assumption).}
We assume that when execution is good, the core matches the teacher policy:
\begin{equation}
\label{eq:teacher_alignment_assumption_app}
\pi_{\mathrm{good}}(\cdot\mid s,z)=\pi_L(\cdot\mid s)\quad \text{for all $(s,z)$}.
\end{equation}
This is a modeling abstraction capturing the idea that ``good'' executions correspond to teacher-level behavior (high reward/validity).

\paragraph{Guide-averaged executability and induced mixture.}
Combining Eq.~\eqref{eq:exact_mixture_identity_app} with the composed policy Eq.~\eqref{eq:composite_policy_app} and the teacher-alignment assumption yields
\begin{equation}
\label{eq:composite_mixture_alpha_app}
\pi_{gc}(\cdot\mid s)
=
\alpha(s)\,\pi_L(\cdot\mid s)
+
(1-\alpha(s))\,\rho_s(\cdot),
\end{equation}
where the mixture weight is the \emph{guide-averaged executability}
\begin{equation}
\label{eq:alpha_def_app}
\alpha(s)\triangleq \E_{z\sim\pi_g(\cdot\mid s)}[q(s,z)]
=
\sum_{z\in\mathcal{Z}} \pi_g(z\mid s)\,q(s,z),
\end{equation}
and $\rho_s$ aggregates bad-execution behavior across strategies:
\begin{equation}
\label{eq:rho_def_app}
\rho_s(\cdot)
\triangleq
\frac{1}{1-\alpha(s)}
\sum_{z\in\mathcal{Z}}
\pi_g(z\mid s)\,(1-q(s,z))\,\pi_{\mathrm{bad}}(\cdot\mid s,z),
\qquad (\alpha(s)<1).
\end{equation}

\paragraph{Value gap bound.}
Let $d_L \triangleq \frac{1}{H}\sum_{t=0}^{H-1} d_{\pi_L,t}$ be the average state visitation distribution under $\pi_L$.
Then:
\begin{theorem}[Value gap of \GCoP]
\label{thm:value_gap_alpha_app}
Under Eq.~\eqref{eq:composite_mixture_alpha_app}, for any $s_0$,
\begin{equation}
\label{eq:value-gap-thm_app}
V_H^{\pi_L}(s_0)-V_H^{\pi_{gc}}(s_0)\;\le\;2HR_{\max}\,\E_{s\sim d_L}[1-\alpha(s)].
\end{equation}
\end{theorem}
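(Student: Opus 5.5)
The plan is to compare $\pi_L$ and $\pi_{gc}$ through the performance-difference identity. The main input is the mixture representation in Eq.~\eqref{eq:composite_mixture_alpha_app}, which gives, for every state $s$,
\[
\pi_L(\cdot\mid s)-\pi_{gc}(\cdot\mid s)=(1-\alpha(s))\big(\pi_L(\cdot\mid s)-\rho_s(\cdot)\big),
\qquad
\big\|\pi_L(\cdot\mid s)-\pi_{gc}(\cdot\mid s)\big\|_1\le 2\,(1-\alpha(s)).
\]
So the two policies differ at $s$ only on a set of mass $1-\alpha(s)$. The factor $2$ in the theorem should come from this $\ell_1$ bound. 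The factor $H$ should come from rewriting the sum over time steps as $H$ times an expectation under the normalized occupancy $d_L=\frac1H\sum_t d_{\pi_L,t}$.

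First I write the finite-horizon performance-difference lemma, keeping the evaluated policy $\pi_{gc}$ on the inside. Let $Q_t^{gc}(s,a)=r(s,a)+\E[V^{\pi_{gc}}_{t+1}(s')]$ be the step-$t$ action value of $\pi_{gc}$. The lemma gives
\[
V_H^{\pi_L}(s_0)-V_H^{\pi_{gc}}(s_0)=\sum_{t=0}^{H-1}\E_{s\sim d_{\pi_L,t}}\Big[\big\langle \pi_L(\cdot\mid s)-\pi_{gc}(\cdot\mid s),\,Q_t^{gc}(s,\cdot)\big\rangle\Big].
\]
Second, I substitute the mixture identity into each summand, which produces the term $(1-\alpha(s))\langle \pi_L-\rho_s,\,Q_t^{gc}(s,\cdot)\rangle$. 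I then bound this by Hölder's inequality, $\ell_1$ against $\ell_\infty$. Third, I collect the sum over $t$ as $H\,\E_{s\sim d_L}[\cdot]$. If every per-step summand is at most $2R_{\max}(1-\alpha(s))$, this gives exactly $2HR_{\max}\,\E_{d_L}[1-\alpha]$.

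The main obstacle is the second step, and I flag a genuine problem with it. With $r_t\in[0,R_{\max}]$, the crude bound is $\|Q_t^{gc}\|_\infty\le (H-t)R_{\max}$. That yields only $\sum_t (H-t)R_{\max}\E_{d_{\pi_L,t}}[1-\alpha]\le H^2R_{\max}\E_{d_L}[1-\alpha]$. For the stated $2HR_{\max}$ constant I need each per-step term to be controlled at scale $R_{\max}$, not $(H-t)R_{\max}$.

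I would first try to shift the comparison onto immediate rewards, splitting $Q_t^{gc}(s,a)=r(s,a)+\E[V^{\pi_{gc}}_{t+1}(s')]$. The reward part is bounded in $[0,R_{\max}]$ and contributes $\le 2R_{\max}(1-\alpha(s))$ per step through the $\ell_1$ bound. The continuation part vanishes in two situations:
\begin{itemize}
\item when the bad-execution component $\rho_s$ does not alter the next-state law relative to $\pi_L$;
\item in the single-turn setting the experiments actually use, where the strategy is executed once and the episode's outcome is settled at that step.
\end{itemize}

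However, the statement as worded cannot hold in this generality. Take a chain where one mistake at $t=0$ forfeits all future reward, with $\alpha(s)=1$ at every state other than $s_0$. The value gap is then of order $(1-\alpha(s_0))HR_{\max}$. Meanwhile $\E_{d_L}[1-\alpha]=(1-\alpha(s_0))/H$, so the right-hand side is only $2R_{\max}(1-\alpha(s_0))$, which is smaller by a factor of order $H$.

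So I would state explicitly that the continuation term must be neutralized, by one of the conditions above, to obtain the theorem with its exact constant and the normalized $d_L$. Absent such a condition, the argument delivers the $H^2R_{\max}$ version, and I would report that honestly rather than force the constant.
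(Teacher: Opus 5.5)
Your analysis is correct, and the flaw it exposes is in the paper's argument, not yours. The paper's proof follows the setup you describe. The mixture identity gives $\TV(\pi_{gc}(\cdot\mid s),\pi_L(\cdot\mid s))\le 1-\alpha(s)$. The paper then cites, without proof, a ``standard finite-horizon simulation bound''
\[
V_H^{\pi_L}(s_0)-V_H^{\pi_{gc}}(s_0)\le 2HR_{\max}\,\E_{s\sim d_L}[\TV(\pi_{gc}(\cdot\mid s),\pi_L(\cdot\mid s))],
\]
with the normalized $d_L=\frac1H\sum_t d_{\pi_L,t}$.

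Your chain refutes exactly that cited inequality. The two policies differ only at $s_0$, where $\TV=1-\alpha(s_0)$. So the right-hand side is $2R_{\max}(1-\alpha(s_0))$, while the actual gap is $HR_{\max}(1-\alpha(s_0))$, and the theorem fails for every $H\ge 3$.

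The correct general statement is the compounding-error bound you derived. You do not need H\"older, which would put a factor $2$ in front of your $H^2$. Since $Q_t^{gc}\ge 0$,
\[
(1-\alpha(s))\langle \pi_L-\rho_s, Q_t^{gc}(s,\cdot)\rangle\le (1-\alpha(s))\max_a Q_t^{gc}(s,a)\le(1-\alpha(s))(H-t)R_{\max},
\]
and summing over $t$ gives
\[
V_H^{\pi_L}(s_0)-V_H^{\pi_{gc}}(s_0)\le R_{\max}\sum_{t=0}^{H-1}(H-t)\,\E_{s\sim d_{\pi_L,t}}[1-\alpha(s)]\le H^2R_{\max}\,\E_{s\sim d_L}[1-\alpha(s)].
\]
Since $H-t\le 2$ for all $t$ if and only if $H\le 2$, this bound together with your chain shows the theorem as stated holds precisely when $H\le 2$.

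Both of your repair conditions are sound. Each one eliminates the continuation term and leaves at most $(1-\alpha(s))R_{\max}$ per step, which gives the even stronger $HR_{\max}\E_{d_L}[1-\alpha]$.

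Two other readings would also rescue the paper's displayed form:
\begin{itemize}
\item Take $d_L$ to be the unnormalized occupancy $\sum_t d_{\pi_L,t}$. The paper's right-hand side then equals $2H^2R_{\max}\E_{s\sim d_L}[1-\alpha(s)]$ for the normalized $d_L$, which dominates your bound.
\item Bound the \emph{total} return, rather than each $r_t$, by $R_{\max}$, as with a single terminal pass/fail reward in the paper's experiments. Then $V_H^{\pi_L}-V_H^{\pi_{gc}}$ is at most $R_{\max}$ times the total-variation distance between the trajectory laws. That is at most $R_{\max}\sum_t\E_{d_{\pi_L,t}}[1-\alpha]=HR_{\max}\E_{d_L}[1-\alpha]$.
\end{itemize}

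Reporting the $H^2$ version instead of forcing the stated constant is the right call.
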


\paragraph{Utility condition.}
Combining Eq.~\eqref{eq:value-gap-thm_app} with the net-utility objective implies that \GCoP\ improves net utility whenever the cost advantage dominates the executability shortfall:
\begin{equation}
\label{eq:utility_condition_alpha_app}
\lambda\!\left(T_{\pi_L}(s_0)-T_{\pi_{gc}}(s_0)\right)
\;>\;
2HR_{\max}\,\E_{s\sim d_L}[1-\alpha(s)].
\end{equation}
This motivates guide training procedures that explicitly increase $\alpha(s)$ on task-relevant states.

\subsection{Instantiations of \GCoP}
\label{app:gcop_instantiations}
Different methods correspond to different choices of $\pi_g$:
\begin{itemize}
\item \textbf{\GCoP(Base):} fixed prompted guide emitting strategies in a prescribed format.
\item \textbf{\GCoP(SFT):} guide trained by supervised fine-tuning on strategy data.
\item \textbf{\GCoP(\textsc{Advisor}):} guide trained using the recipe proposed by~\cite{asawa2025train}.
\item \textbf{\GCoP(\textsc{ExecTune}) (ours):} guide initialized by SFT checkpoint on accepted strategies and refined via structure-aware GRPO to explicitly optimize executability.
\end{itemize}

\subsection{Accepted-strategy target distribution via teacher-guided acceptance sampling}
\label{app:accepted_sampling}

\paragraph{Curation pipeline (expanded).}
We operationalize executable strategy supervision using teacher-guided acceptance sampling (Fig.~\ref{fig:sft_recipe}):
a strong teacher proposes candidate strategies for each problem instance, and the target core executes conditioned on each strategy.
An environment/validator assigns success/failure (and potentially feedback). We retain strategies that pass a validation threshold, optionally iterating teacher refinement when failures occur.
The resulting accepted (problem, strategy) pairs form the SFT corpus.

\paragraph{Accepted-strategy distribution.}
Fix state $s$. Let $z\sim \pi_L(\cdot\mid s)$ be a teacher-proposed strategy.
We evaluate it with $K$ i.i.d.\ trials (or stochastic executions) to obtain an empirical success $\hat q_K(s,z)$.
We accept iff $\hat q_K(s,z)\ge \tau$, for threshold $\tau\in(0,1)$.
This induces the accepted distribution $\pi_{\mathrm{acc}}(\cdot\mid s)$:
\[
\pi_{\mathrm{acc}}(z\mid s)\propto \pi_L(z\mid s)\cdot \mathbf{1}\{\hat q_K(s,z)\ge \tau\}.
\]
Let $A_s\triangleq \Pr_{z\sim\pi_L(\cdot\mid s)}(\text{accept})$ denote the acceptance rate.

\paragraph{Guarantee: acceptance increases expected success.}
\begin{theorem}[Accepted strategies have high expected success]
\label{thm:accepted_high_q_app}
Fix $s$ and accept iff $\hat q_K(s,z)\ge\tau$. For any $\eta>0$, define $\delta\triangleq e^{-2K\eta^2}$.
Then
\[
\Pr_{z\sim\pi_{\mathrm{acc}}(\cdot\mid s)}\!\big(q(s,z)\le \tau-\eta\big)\;\le\;\frac{\delta}{A_s},
\]
and consequently
\begin{equation}
\label{eq:accepted_q_lower_bound_app}
\E_{z\sim \pi_{\mathrm{acc}}(\cdot\mid s)}[q(s,z)]
\;\ge\;
(\tau-\eta)\Big(1-\frac{\delta}{A_s}\Big).
\end{equation}
\end{theorem}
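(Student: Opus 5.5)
The plan is a Hoeffding bound applied to bad strategies, followed by Bayes' rule to move from the teacher distribution to the accepted distribution. Fix $s$ and call a strategy $z$ \emph{bad} if $q(s,z)\le\tau-\eta$. Conditional on $z$, the $K$ validation trials are i.i.d.\ Bernoulli with mean $q(s,z)$. For a bad $z$, the event $\hat q_K(s,z)\ge\tau$ forces $\hat q_K(s,z)-q(s,z)\ge\eta$. The one-sided Hoeffding inequality then gives
\[
\Pr\big(\text{accept}\mid z\big)\le e^{-2K\eta^2}=\delta \quad\text{for every bad } z.
\]
Here the probability is over the validation randomness only.

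Next, treat $z\sim\pi_L(\cdot\mid s)$ and the trials as jointly random. By definition, $\pi_{\mathrm{acc}}(\cdot\mid s)$ is the conditional law of $z$ given acceptance. Bayes' rule then gives
\[
\Pr_{z\sim\pi_{\mathrm{acc}}}\big(q(s,z)\le\tau-\eta\big)=\frac{\Pr(\text{bad},\text{accept})}{A_s}.
\]
The numerator equals $\sum_{z\ \text{bad}}\pi_L(z\mid s)\Pr(\text{accept}\mid z)$. By the Hoeffding step this is at most $\delta\Pr_{\pi_L}(\text{bad})\le\delta$, which proves the first claim. One caveat is that $\pi_{\mathrm{acc}}$ is written in the paper with the indicator $\mathbf{1}\{\hat q_K\ge\tau\}$ inside. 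I will read it as marginalizing over the trial randomness, which is what makes $A_s$ a well-defined acceptance rate.

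For the expectation bound, use $q\ge0$ and split on the event $\{q(s,z)>\tau-\eta\}$:
\[
\E_{\pi_{\mathrm{acc}}}[q]\ge(\tau-\eta)\Pr_{\pi_{\mathrm{acc}}}\big(q>\tau-\eta\big)\ge(\tau-\eta)\Big(1-\frac{\delta}{A_s}\Big).
\]
If $\tau-\eta\le0$ or $\delta\ge A_s$, the right-hand side is nonpositive and the bound is trivial, so nothing more is needed in those cases.

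The main obstacle is the modeling step rather than any calculation. I need to fix that the $K$ trials are conditionally i.i.d.\ given $(s,z)$, with success probability exactly $q(s,z)$, and that acceptance is evaluated over the joint randomness. With that settled, the Hoeffding and Bayes steps are routine.
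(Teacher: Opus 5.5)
Your main argument is the paper's proof. Hoeffding bounds $\Pr(\text{accept}\mid s,z)\le\delta$ on the bad set $\{z: q(s,z)\le\tau-\eta\}$. Writing $\pi_{\mathrm{acc}}(z\mid s)=\pi_L(z\mid s)\Pr(\text{accept}\mid s,z)/A_s$ then bounds the bad mass by $\delta/A_s$; the paper's proof makes the same switch from the indicator to the trial-marginalized acceptance probability that you flag. Splitting the expectation on the complement of the bad set gives the second claim.

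Your final remark needs correcting, and the correction affects the statement itself. The step $(\tau-\eta)\Pr_{\pi_{\mathrm{acc}}}(q>\tau-\eta)\ge(\tau-\eta)(1-\delta/A_s)$ requires $\tau-\eta\ge 0$. Your fallback, ``the right-hand side is nonpositive,'' is false when $\eta>\tau$ and $\delta>A_s$ hold together, because the right-hand side is then a product of two negative numbers. In that case the inequality actually fails. Take $K=1$, $\tau=\tfrac12$, $\eta=1$, and a teacher all of whose strategies have $q(s,z)=\tfrac{1}{100}$. Acceptance means the single trial succeeds, so $A_s=\tfrac{1}{100}$, $\pi_{\mathrm{acc}}=\pi_L$, and $\E_{\pi_{\mathrm{acc}}}[q]=\tfrac{1}{100}$. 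Meanwhile $(\tau-\eta)(1-\delta/A_s)=-\tfrac12(1-100e^{-2})\approx 6.27$. The first claim still holds there, since the bad set is empty when $\eta>\tau$.

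So the second claim needs the hypothesis $\eta\le\tau$, or $\tau-\eta$ replaced by $\max\{\tau-\eta,0\}$. The paper's last inequality also uses $\tau-\eta\ge 0$ without stating it. Under that hypothesis your chain goes through directly for either sign of $1-\delta/A_s$, because $\Pr_{\pi_{\mathrm{acc}}}(q>\tau-\eta)\ge 1-\delta/A_s$ always holds. No separate case analysis is needed.
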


\paragraph{Implication for executability.}
Training the guide to imitate $\pi_{\mathrm{acc}}(\cdot\mid s)$ via SFT increases $\alpha(s)=\E_{z\sim\pi_g}[q(s,z)]$ toward the accepted-strategy expectation in Eq.~\eqref{eq:accepted_q_lower_bound_app}, up to imitation error.
This is the first stage of \textsc{ExecTune}.

\subsection{\textsc{ExecTune} details}
\label{app:exectune_details}

\paragraph{Stage 1: SFT on executable strategies.}
We initialize the guide $\pi_g$ by supervised fine-tuning on accepted strategy pairs $(s,z)\sim \pi_{\mathrm{acc}}(\cdot\mid s)$, using a fixed output schema that includes a required \texttt{<strategy>} block.
This biases generation toward strategies that the target core can execute, directly improving $\alpha(s)$.

\subsection{Structure-aware GRPO for guide training}
\label{app:structure_aware_grpo}

\paragraph{RL view.}
We treat the \GCoP\ system as the deployed policy: the guide samples $z\sim\pi_g(\cdot\mid s)$, the core executes conditioned on $(s,z)$, producing an outcome evaluated by the environment/validator to yield episodic reward $R$ (e.g., undiscounted return).
We apply GRPO updates to the guide parameters using this composite reward signal.

\paragraph{Why structure-aware shaping is necessary.}
Task reward alone can yield degenerate behavior: the guide may omit the explicit strategy or hide it in free-form text, while the core still sometimes succeeds.
Such solutions do not improve the reusable guide--core interface and do not reliably increase executability in the sense required by the mixture model.

\paragraph{Structure-aware reward.}
We enforce an explicit interface by requiring a well-formed \texttt{<strategy>}...\texttt{</strategy>} block (optionally alongside \texttt{<thinking>}...\texttt{</thinking>}).
Let
\[
\mathbb{I}_{\mathrm{str}} \triangleq \mathbf{1}\{\texttt{<strategy>} \text{ block present and syntactically valid}\}.
\]
We use the shaped reward
\begin{equation}
\label{eq:structure_aware_reward_app}
\tilde R
\triangleq
R\cdot \mathbb{I}_{\mathrm{str}}
+\beta\,\mathbb{I}_{\mathrm{str}},
\end{equation}
where $\beta\ge 0$ is a small bonus for producing a valid strategy block.
Thus malformed or missing strategies receive $\tilde R=0$ even if the final answer is correct, preventing ``implicit strategy'' collapse.

\paragraph{Connection to executability.}
This shaping explicitly encourages parseable strategies and reduces brittle behavior.
Empirically, it increases the probability that the guide emits usable strategies and improves downstream execution reliability, i.e., increases $\alpha(s)$, tightening the gap bound in Theorem~\ref{thm:value_gap_alpha_app}.

\subsubsection{Judge-based strategy shaping}
\label{app:judge_shaping}

\paragraph{Parsed-strategy judge score.}
Let $z$ denote the strategy parsed from the guide output. We compute a scalar judge score
\[
J(s,z)\in[0,1],
\]
using an LLM-as-a-Judge that evaluates (i) the usefulness/specificity of the strategy for solving $s$, and (ii) whether $z$ improperly contains the final answer (e.g., a full code implementation), which would break the intended guide--core separation.
We treat $J(s,z)$ as a reward shaping signal used only for training the guide.

\paragraph{Non-leakage constraint.}
If the judge flags direct answer leakage, we set $J(s,z)=0$ (or equivalently apply an additional penalty), ensuring that leaked strategies are not reinforced even when the downstream execution succeeds.
This promotes reusable high-level strategies rather than copying final solutions into the strategy channel.

\subsubsection{No-negative-behavior shaping (anti-regression)}
\label{app:anti_regression}

\paragraph{Baseline-vs-guided core comparison.}
To ensure that guidance does not induce harmful behavior, we compare the reward of the core running \emph{without} a strategy to the reward when conditioned on the guide's strategy.
Let $y_0 \sim \pi_c(\cdot \mid s)$ denote the core output without guidance and $y_z \sim \pi_c(\cdot \mid s,z)$ denote the output with guidance.
Let $r(\cdot)$ be the task reward computed by the environment/validator. Define
\begin{equation}
\label{eq:deltaR_app}
\Delta R(s,z)
\triangleq
\E\big[r(y_z)\big]-\E\big[r(y_0)\big],
\end{equation}
where the expectation is over core sampling (and any environment stochasticity).
When $\Delta R(s,z)<0$, the strategy has degraded the core's behavior relative to the unguided baseline.

\paragraph{Penalty form.}
We penalize only degradations via the hinge $[-\Delta R]_+$, yielding a term $-\kappa[-\Delta R(s,z)]_+$ in the shaped reward.
This avoids suppressing beneficial strategies ($\Delta R\ge 0$) while discouraging brittle advice that reduces accuracy.

\subsubsection{Final shaped reward for GRPO}
\label{app:final_reward}

Let $\mathbb{I}_{\mathrm{str}}$ indicate whether the guide output contains a well-formed \texttt{<strategy>} block, and let $R$ denote the episodic task reward obtained from executing the core conditioned on the strategy (i.e., from the guided rollout).
Our full shaped reward is
\begin{equation}
\label{eq:exectune_reward_app}
\tilde R
=
\mathbb{I}_{\mathrm{str}}
\Big(
R + \beta + \gamma J(s,z) - \kappa[-\Delta R(s,z)]_+
\Big),
\end{equation}
with hyperparameters $\beta,\gamma,\kappa\ge 0$.
This shaping enforces (i) explicit structured strategies, (ii) high-quality non-leaky strategies, and (iii) a conservative ``do-no-harm'' constraint relative to the unguided core.

\paragraph{Practical notes (implementation-level).}
In our implementation, $\mathbb{I}_{\mathrm{str}}$ is computed via a deterministic parser that checks:
(i) both opening/closing tags exist, (ii) tags are properly nested, and (iii) the extracted strategy is non-empty and within a token budget.
We optionally add additional interface constraints (e.g., enumerated steps, JSON schema) as additional binary indicators; the main paper uses the minimal \texttt{<strategy>} schema for generality.

\subsection{Putting it together: condensed algorithmic description}
\label{app:algorithm_exectune}

\paragraph{\textsc{ExecTune} (informal).}
(1) Use a strong teacher to propose candidate strategies; execute with the target core and accept those that pass validation to form $\pi_{\mathrm{acc}}$ and an SFT corpus.
(2) SFT the guide on accepted strategies to initialize $\pi_g$.
(3) Run structure-aware GRPO on the guide using shaped reward $\tilde R$ in Eq.~\eqref{eq:structure_aware_reward_app}, where the environment/validator reward is computed from the core execution.
This explicitly aligns guide generation with downstream executability, improving $\alpha(s)$ and thus net utility under Eq.~\eqref{eq:reward_cost_objective_app}.

\subsection{Proofs}
\label{app:theory}

\subsubsection{Proof of \cref{thm:value_gap_alpha_main}}
\begin{proof}
From $\pi_{gc}(\cdot\mid s)=\alpha(s)\pi_L(\cdot\mid s)+(1-\alpha(s))\rho_s(\cdot)$ we have
\[
\pi_{gc}(\cdot\mid s)-\pi_L(\cdot\mid s)=(1-\alpha(s))\big(\rho_s(\cdot)-\pi_L(\cdot\mid s)\big).
\]
Taking total variation and using homogeneity plus $\TV(\rho_s,\pi_L)\le 1$ yields
\begin{equation}
\label{eq:tv_bound_alpha}
\TV\!\big(\pi_{gc}(\cdot\mid s),\pi_L(\cdot\mid s)\big)\;\le\;1-\alpha(s).
\end{equation}
A standard finite-horizon simulation bound for undiscounted returns with rewards in $[0,R_{\max}]$ gives
\begin{equation}
\label{eq:sim_bound}
V_H^{\pi_L}(s_0)-V_H^{\pi_{gc}}(s_0)
\;\le\;
2HR_{\max}\,
\mathbb{E}_{s\sim d_L}\!\left[\TV\!\big(\pi_{gc}(\cdot\mid s),\pi_L(\cdot\mid s)\big)\right].
\end{equation}
Combining \cref{eq:tv_bound_alpha,eq:sim_bound} proves \cref{eq:value-gap-thm_app}.
\end{proof}

\subsubsection{Proof of \cref{thm:accepted_high_q_main}}
\begin{proof}
Let $\hat q_K(s,z)=\frac{1}{K}\sum_{i=1}^K X_i$ be the empirical mean of $K$ i.i.d.\ Bernoulli trials with mean $q(s,z)$, and accept iff $\hat q_K(s,z)\ge\tau$. Define the bad set $\mathcal{B}(s)\triangleq\{z:q(s,z)\le\tau-\eta\}$. If $z\in\mathcal{B}(s)$, then $\tau-q(s,z)\ge\eta$, and Hoeffding's inequality implies
\[
\begin{aligned}
\Pr(\text{accept}\mid s,z)
&=\Pr\!\big(\hat q_K(s,z)\ge\tau\mid s,z\big) \\
&=\Pr\!\big(\hat q_K(s,z)-q(s,z)\ge \tau-q(s,z)\mid s,z\big) \\
&\le \Pr\!\big(\hat q_K(s,z)-q(s,z)\ge \eta\mid s,z\big) \\
&\le e^{-2K\eta^2}
= \delta.
\end{aligned}
\]

Now use the definition of the accepted distribution $\pi_{\mathrm{acc}}(z\mid s)\propto \pi_L(z\mid s)\Pr(\text{accept}\mid s,z)$ with normalization $A_s$. Then
\begin{align}
\Pr_{z\sim\pi_{\mathrm{acc}}(\cdot\mid s)}(z\in\mathcal{B}(s))
&=
\sum_{z\in\mathcal{B}(s)} \pi_{\mathrm{acc}}(z\mid s)
=
\sum_{z\in\mathcal{B}(s)} \frac{\pi_L(z\mid s)\Pr(\text{accept}\mid s,z)}{A_s} \nonumber\\
&\le
\sum_{z\in\mathcal{B}(s)} \frac{\pi_L(z\mid s)\delta}{A_s}
\le
\frac{\delta}{A_s},
\label{eq:bad_mass_proof}
\end{align}
which is $\Pr_{z\sim\pi_{\mathrm{acc}}(\cdot\mid s)}\!\big(q(s,z)\le \tau-\eta\big)\;\le\;\frac{\delta}{A_s}$. For the expectation bound, let $\mathcal{G}(s)=\mathcal{B}(s)^c=\{z:q(s,z)>\tau-\eta\}$. Since $q(s,z)\ge 0$ everywhere and $q(s,z)\ge\tau-\eta$ on $\mathcal{G}(s)$,
\begin{align}
\mathbb{E}_{z\sim\pi_{\mathrm{acc}}(\cdot\mid s)}[q(s,z)]
&\ge
\mathbb{E}\!\big[q(s,z)\mathbf{1}\{z\in\mathcal{G}(s)\}\big]
\ge
(\tau-\eta)\Pr(z\in\mathcal{G}(s)) \nonumber\\
&=
(\tau-\eta)\big(1-\Pr(z\in\mathcal{B}(s))\big)
\ge
(\tau-\eta)\Big(1-\frac{\delta}{A_s}\Big),
\label{eq:accepted_q_proof}
\end{align}
which is \eqref{eq:accepted_q_lower_bound_app}.
\end{proof}

\subsection{Ensuring $\delta < A_s$ in practice}
\label{app:empirical_As}

The lower bound in Theorem~\ref{thm:accepted_high_q_main} depends on the ratio $\delta/A_s$, where
$\delta=e^{-2K\eta^2}$ is the validation error term and $A_s=\Pr_{z\sim\pi_L(\cdot\mid s)}(\text{accept})$ is the (unknown)
acceptance rate. To make the bound nonvacuous in practice, we estimate $A_s$ from teacher proposals and form a high-confidence
lower bound. Concretely, draw $M$ independent strategies $z_1,\dots,z_M\sim\pi_L(\cdot\mid s)$, run the acceptance test, and set
$Y_j \triangleq \mathbf{1}\{\text{$z_j$ is accepted}\}$ so that $\mathbb{E}[Y_j]=A_s$. The empirical acceptance rate is
$\widehat A_s \triangleq \frac{1}{M}\sum_{j=1}^M Y_j$. By Hoeffding, for any $\varepsilon>0$,
$\Pr\!\big(A_s \le \widehat A_s - \varepsilon\big)\le e^{-2M\varepsilon^2}$, equivalently with probability at least
$1-e^{-2M\varepsilon^2}$ we have the lower confidence bound $A_s \ge \widehat A_s-\varepsilon$. We then choose the validation
strength (via $K$ and/or $\eta$) so that $\delta \le \widehat A_s-\varepsilon$, i.e.,
$e^{-2K\eta^2}\le \widehat A_s-\varepsilon$. On the same high-probability event this implies $\delta \le A_s$ and hence
$\delta/A_s \le 1$, ensuring the acceptance-sampling lower bound is nonvacuous. More conservatively, selecting
$\delta \le \tfrac{1}{2}(\widehat A_s-\varepsilon)$ guarantees $\delta/A_s \le 1/2$ under the same event, making the factor
$(1-\delta/A_s)$ quantitatively meaningful.

\section{Implementation Details}
\label{app:implementation}
We implement our framework on top of TRL \citep{vonwerra2022trl}, which provides efficient distributed training and a modular implementation of GRPO. We integrate TRL with vLLM for fast inference and large-scale rollout execution, enabling scalable multi-turn training in our experiments. Prompts used for training and evaluation are listed in \autoref{app:prompt}. The Qwen3 base guide models are available for research use under the Apache 2.0 license.

For RL post training with GRPO, we set the KL regularization factor \(\beta = 0.004\), learning rate to \(10^{-6}\), weight decay to \(0.1\), batch-size to $8$, rollouts to $8$ and max-completion-length to $2048$. During evaluation, we set the temperature to~0.6 and top-$p$ to~0.95 for all experiments that use Qwen3 models.

\section{Qualitative analysis of generated strategies}
\label{app:qualitative_analysis}

We provide the full problem statement and the complete strategy outputs for Base, SFT, \textsc{Advisor}, and \textsc{ExecTune} below.

\begin{tcolorbox}[ enhanced, colback=blue!5, colframe=blue!30, coltitle=white, fonttitle=\bfseries, title=Problem, attach boxed title to top left={yshift=-2mm, xshift=5mm}, boxed title style={colback=blue!30} ] When you are allocated a room for a programming competition, you have access to a bookshelf with M books. You have a list of k books you are particularly interested in reading during your free time. You want to ensure you have these books before anyone else takes them. Given two lists: the first list contains the titles of the M books currently on the bookshelf, and the second list consists of the k book titles you are interested in. Write a function that checks whether all the titles in your list are present on the bookshelf. \end{tcolorbox}

\begin{tcolorbox}[ enhanced, colback=yellow!10, colframe=orange!40, coltitle=black, fonttitle=\bfseries, title=Strategy from Baseline Guide ] Check if every book in the interested\_books list is present in the bookshelf list. If all are present, return True; otherwise, return False. \end{tcolorbox}

\begin{tcolorbox}[ enhanced, colback=purple!8, colframe=purple!50, coltitle=black, fonttitle=\bfseries, title=Strategy from SFT Guide ] \textbf{Guidance for the All Books Available Problem:} \textbf{Step 1: Understand the Problem} \begin{itemize} \item You need to verify if every book in your interest list exists on the bookshelf \item Return True if all interested books are present; False otherwise \end{itemize} \textbf{Step 2: Consider Data Structures} \begin{itemize} \item Think about what data structure would make checking `membership" (is this book present?) efficient \item Consider what data structure would make checking `subset" (are all my books present?) efficient \end{itemize} \textbf{Step 3: Key Insight - Set Theory} \begin{itemize} \item This is fundamentally a subset relationship problem \item If every element in set A exists in set B, then A is a subset of B \item Python's set data structure has built-in methods for subset checking \end{itemize} \textbf{Step 4: Solution Approach} \begin{enumerate} \item Convert both lists to sets (this enables efficient membership testing) \item Use set operations to determine if one set is entirely contained within another \item Return True if all interested books are present, False otherwise \end{enumerate} \textbf{Step 5: Consider Edge Cases} \begin{itemize} \item What if interested\_books has duplicates? (Set conversion handles this naturally) \item What if interested\_books contains books not on the bookshelf? (Set difference will show this) \end{itemize} \textbf{Step 6: Python Set Methods to Consider} \begin{itemize} \item \texttt{.issubset()} or equivalent operators to check if all elements of one set are in another \item Set intersection vs. difference vs. union - think about which relates to `all elements present" \end{itemize} \textbf{Key Hint:} Instead of iterating through each interested book and checking if it exists in the bookshelf, leverage Python's set operations for a more elegant and Pythonic solution. \end{tcolorbox}

\begin{tcolorbox}[ enhanced, colback=orange!10, colframe=orange!60, coltitle=black, fonttitle=\bfseries, title=Strategy from \textsc{Advisor} model ] Check if all books in interested\_books are present in the bookshelf. For each book in interested\_books, if any book is not found in the bookshelf, return False. Otherwise, return True. Use the in' keyword for membership check. The function should handle duplicates in interested\_books by checking each book individually. \end{tcolorbox}

\begin{tcolorbox}[ enhanced, colback=green!8, colframe=green!60!black, coltitle=white, fonttitle=\bfseries, title=Strategy from \textsc{ExecTune} Guide ] \textbf{\large Step-by-step Guidance for Checking Book Availability:} \textbf{1. Understand the Core Task:} \begin{itemize} \item You need to verify that every book in your \texttt{interested\_books} list exists somewhere in the \texttt{bookshelf} list. \item Example: If interested\_books is \texttt{["A", "B"]} and bookshelf is \texttt{["B", "C", "A"]}, return \texttt{True}. \end{itemize} \textbf{2. Key Question to Guide Toward:} \begin{itemize} \item `How can I efficiently check if each book in my interest list exists in the bookshelf collection?" \item Consider: What Python operations or methods can check membership or presence? \end{itemize} \textbf{3. Alternative Approaches to Consider:} \begin{itemize} \item Set operations: Sets have mathematical operations that can help identify common elements. \item Membership testing: Python's \texttt{in} operator can check if an element exists in a collection. \item List methods: There are built-in methods for searching within lists. \end{itemize} \textbf{4. Edge Cases to Think About:} \begin{itemize} \item Empty \texttt{interested\_books} list (should return \texttt{True} by convention) \item Duplicate books in bookshelf \item Case sensitivity (problem assumes case-sensitive matching) \end{itemize} \textbf{5. Pythonic Approach:} Leverage Python's set operations or membership testing for clean, readable code. The gold solution uses set intersection, but list membership testing is also valid. \vspace{2mm} \textit{\textbf{Hint:} Python has built-in operators/methods that can check if all elements of one collection exist in another. Consider \texttt{set} operations or the \texttt{in} operator for membership checking.} \end{tcolorbox}

\section{Prompts for training and evaluation}
\label{app:prompt}

\begin{tcolorbox}[
  enhanced,
  breakable,
  colback=ngreen,
  colframe=green!20!white,
  coltitle=black,
  title=\textbf{\textsc{ExecTune} System Prompt (code generation)},
]
You are an expert Python programmer. Generate guidance for the given coding problem.\\
Your response should be a strategy that helps guide the solution, not the implementation itself. \texttt{OUTPUT FORMAT:} You MUST format your response as follows:\\
1. First, use \texttt{<think></think>} tags for your internal reasoning\\
2. Then, use \texttt{<strategy></strategy>} tags for your final strategy\\\\
Example format: \\ \texttt{<think>}Let me analyze the problem...\texttt{</think><strategy>}strategy to solve the problem\texttt{</strategy>}

\end{tcolorbox}

\begin{tcolorbox}[
  enhanced,
  breakable,
  colback=nblue,
  colframe=blue!20!white,
  coltitle=black,
  title=\textbf{\textsc{ExecTune} Evaluation System Prompt (mathematical reasoning)},
]

You are an expert mathematician. A strategy/hint has been provided to help guide your solution. You should consider this strategy if it seems reasonable and helpful, but you are free to use your own approach if you believe it's better. Your goal is to find the correct answer.
\\
The expected format for the final answer is \texttt{\\boxed\{{answer}\}}

\end{tcolorbox}

\begin{tcolorbox}[
  enhanced,
  breakable,
  colback=nblue,
  colframe=blue!20!white,
  coltitle=black,
  title=\textbf{\textsc{ExecTune} Evaluation System Prompt (code generation)},
]

You are an expert Python programmer. A strategy/hint has been provided to help guide your solution. You should consider this strategy if it seems reasonable and helpful, but you are free to use your own approach if you believe it's better. Your goal is to write correct, working code.\\
\\
Write your full implementation (restate the function signature). The expected format for the final answer is \texttt{```python {python code goes here}```}

\end{tcolorbox}

\section{Prompts for SFT data generation}
\label{sec:sft_prompts}
\begin{tcolorbox}[
  enhanced,
  breakable,
  colback=ngreen,
  colframe=green!20!white,
  coltitle=black,
  title=\textbf{Prompt to generate strategies},
]
You are an expert Python programmer. Provide a high-level strategy, key insights, or solving approach for the given coding problem. Your response should be a brief strategy that helps guide the solution, not the implementation itself.
OUTPUT FORMAT:
You MUST format your response as follows:
\begin{itemize}
    \item First, use \texttt{<think>...</think>} tags for your internal reasoning
    \item Then, use \texttt{<strategy>...</strategy>} tags for your final strategy
  \end{itemize}
Example format:

\texttt{<think>Let me analyze the problem...</think>
<strategy>Your concise strategy here</strategy>}
\newline
\texttt{<problem>}
\{problem\}
\texttt{</problem>}

\texttt{<solution>}
\{solution\}
\texttt{</solution>}
\end{tcolorbox}

\begin{tcolorbox}[
  enhanced,
  breakable,
  colback=ngreen,
  colframe=green!20!white,
  coltitle=black,
  title=\textbf{Prompt to generate solution},
]
You are a software engineering expert who generate coding solutions for a coding problem based on some hints provided to solve the problem.
You task is to read the problem description carefully and leverage the provided hints to generate an optimal coding solution for the problem.
Use the above debugging strategy as additional context to help solve the problem more effectively. Make the solution concise. You need to import all necessary packages.
Your solution should be in

\begin{verbatim}
```python
# YOUR IMPLEMENTATION HERE
```
\end{verbatim}

\texttt{<problem>}
\{user\_problem\}
\texttt{</problem>}

\texttt{<hints>}
\{hints\}
\texttt{</hints>}

\end{tcolorbox}

\begin{thebibliography}{36}
\providecommand{\natexlab}[1]{#1}
\providecommand{\url}[1]{\texttt{#1}}
\expandafter\ifx\csname urlstyle\endcsname\relax
  \providecommand{\doi}[1]{doi: #1}\else
  \providecommand{\doi}{doi: \begingroup \urlstyle{rm}\Url}\fi

\bibitem[Achille \& Soatto(2026)Achille and Soatto]{achille2025ai}
Alessandro Achille and Stefano Soatto.
\newblock {AI} agents as universal task solvers.
\newblock \emph{Entropy (also arXiv:2510.12066)}, 2026.

\bibitem[Agarwal et~al.(2024)Agarwal, Singh, Dani, Magazine, Ganu, and Nambi]{agarwal2024promptwizard}
Eshaan Agarwal, Joykirat Singh, Vivek Dani, Raghav Magazine, Tanuja Ganu, and Akshay Nambi.
\newblock Promptwizard: Task-aware prompt optimization framework.
\newblock \emph{arXiv preprint arXiv:2405.18369}, 2024.

\bibitem[Agrawal et~al.(2025)Agrawal, Tan, Soylu, Ziems, Khare, Opsahl-Ong, Singhvi, Shandilya, Ryan, Jiang, et~al.]{agrawal2025gepa}
Lakshya~A Agrawal, Shangyin Tan, Dilara Soylu, Noah Ziems, Rishi Khare, Krista Opsahl-Ong, Arnav Singhvi, Herumb Shandilya, Michael~J Ryan, Meng Jiang, et~al.
\newblock Gepa: Reflective prompt evolution can outperform reinforcement learning.
\newblock \emph{arXiv preprint arXiv:2507.19457}, 2025.

\bibitem[Asawa et~al.(2025)Asawa, Zhu, Zaharia, Dimakis, and Gonzalez]{asawa2025train}
Parth Asawa, Alan Zhu, Matei Zaharia, Alexandros~G Dimakis, and Joseph~E Gonzalez.
\newblock How to train your advisor: Steering black-box llms with advisor models.
\newblock \emph{arXiv preprint arXiv:2510.02453}, 2025.

\bibitem[Bai et~al.(2022)Bai, Kadavath, Kundu, Askell, Kernion, Jones, Chen, Goldie, Mirhoseini, McKinnon, et~al.]{bai2022constitutional}
Yuntao Bai, Saurav Kadavath, Sandipan Kundu, Amanda Askell, Jackson Kernion, Andy Jones, Anna Chen, Anna Goldie, Azalia Mirhoseini, Cameron McKinnon, et~al.
\newblock Constitutional ai: Harmlessness from ai feedback.
\newblock \emph{arXiv preprint arXiv:2212.08073}, 2022.

\bibitem[Chen et~al.(2023)Chen, Zaharia, and Zou]{chen2023frugalgpt}
Lingjiao Chen, Matei Zaharia, and James Zou.
\newblock Frugalgpt: How to use large language models while reducing cost and improving performance.
\newblock \emph{arXiv preprint arXiv:2305.05176}, 2023.

\bibitem[Chen et~al.(2022)Chen, Ma, Wang, and Cohen]{chen2022program}
Wenhu Chen, Xueguang Ma, Xinyi Wang, and William~W Cohen.
\newblock Program of thoughts prompting: Disentangling computation from reasoning for numerical reasoning tasks.
\newblock \emph{arXiv preprint arXiv:2211.12588}, 2022.

\bibitem[Cheng et~al.(2024)Cheng, Liu, Zheng, Ke, Wang, Dong, Tang, and Huang]{cheng2024black}
Jiale Cheng, Xiao Liu, Kehan Zheng, Pei Ke, Hongning Wang, Yuxiao Dong, Jie Tang, and Minlie Huang.
\newblock Black-box prompt optimization: Aligning large language models without model training.
\newblock In \emph{Proceedings of the 62nd Annual Meeting of the Association for Computational Linguistics (Volume 1: Long Papers)}, pp.\  3201--3219, 2024.

\bibitem[Cobbe et~al.(2021)Cobbe, Kosaraju, Bavarian, Chen, Jun, Kaiser, Plappert, Tworek, Hilton, Nakano, Hesse, and Schulman]{gsm8k}
Karl Cobbe, Vineet Kosaraju, Mohammad Bavarian, Mark Chen, Heewoo Jun, Lukasz Kaiser, Matthias Plappert, Jerry Tworek, Jacob Hilton, Reiichiro Nakano, Christopher Hesse, and John Schulman.
\newblock Training verifiers to solve math word problems.
\newblock \emph{arXiv preprint arXiv:2110.14168}, 2021.

\bibitem[Do et~al.(2025)Do, Doddipatla, and Knill]{do2025effectiveness}
Cong~Thanh Do, Rama~Sanand Doddipatla, and Kate Knill.
\newblock Effectiveness of chain-of-thought in distilling reasoning capability from large language models.
\newblock In \emph{Proceedings of the 18th International Natural Language Generation Conference}, pp.\  833--845, 2025.

\bibitem[Ekbote et~al.(2025)Ekbote, Lingam, Tehrani, Huan, sujay sanghavi, Deoras, and Soatto]{murphy}
Chanakya Ekbote, Vijay Lingam, Behrooz~Omidvar Tehrani, Jun Huan, sujay sanghavi, Anoop Deoras, and Stefano Soatto.
\newblock Murphy: Reflective multi-turn reinforcement learning for self-correcting code generation in large language.
\newblock In \emph{First Workshop on Foundations of Reasoning in Language Models}, 2025.
\newblock URL \url{https://openreview.net/forum?id=x0Ir7cWEiA}.

\bibitem[{\em et al.}(2021{\natexlab{a}})]{liu-etal-2021-dexperts}
Alisa~Liu {\em et al.}
\newblock {DE}xperts: Decoding-time controlled text generation with experts and anti-experts.
\newblock In \emph{Proceedings of the 59th Annual Meeting of the Association for Computational Linguistics and the 11th International Joint Conference on Natural Language Processing (Volume 1: Long Papers)}, pp.\  6691--6706, Online, August 2021{\natexlab{a}}. Association for Computational Linguistics.
\newblock \doi{10.18653/v1/2021.acl-long.522}.
\newblock URL \url{https://aclanthology.org/2021.acl-long.522/}.

\bibitem[{\em et al.}(2025)]{qwen3}
An~Yang {\em et al.}
\newblock Qwen3 technical report, 2025.
\newblock URL \url{https://arxiv.org/abs/2505.09388}.

\bibitem[{\em et al.}(2021{\natexlab{b}})]{humaneval}
Mark~Chen {\em et al.}
\newblock Evaluating large language models trained on code, 2021{\natexlab{b}}.
\newblock URL \url{https://arxiv.org/abs/2107.03374}.

\bibitem[Fernando et~al.(2023)Fernando, Banarse, Michalewski, Osindero, and Rockt{\"a}schel]{fernando2023promptbreeder}
Chrisantha Fernando, Dylan Banarse, Henryk Michalewski, Simon Osindero, and Tim Rockt{\"a}schel.
\newblock Promptbreeder: Self-referential self-improvement via prompt evolution.
\newblock \emph{arXiv preprint arXiv:2309.16797}, 2023.

\bibitem[Gu et~al.(2023)Gu, Dong, Wei, and Huang]{gu2023minillm}
Yuxian Gu, Li~Dong, Furu Wei, and Minlie Huang.
\newblock Minillm: Knowledge distillation of large language models.
\newblock \emph{arXiv preprint arXiv:2306.08543}, 2023.

\bibitem[Hsieh et~al.(2023)Hsieh, Li, Yeh, Nakhost, Fujii, Ratner, Krishna, Lee, and Pfister]{hsieh2023distilling}
Cheng-Yu Hsieh, Chun-Liang Li, Chih-Kuan Yeh, Hootan Nakhost, Yasuhisa Fujii, Alex Ratner, Ranjay Krishna, Chen-Yu Lee, and Tomas Pfister.
\newblock Distilling step-by-step! outperforming larger language models with less training data and smaller model sizes.
\newblock In \emph{Findings of the Association for Computational Linguistics: ACL 2023}, pp.\  8003--8017, 2023.

\bibitem[Jahan \& Sun(2025)Jahan and Sun]{jahan2025black}
Sohely Jahan and Ruimin Sun.
\newblock Black-box behavioral distillation breaks safety alignment in medical llms.
\newblock \emph{arXiv preprint arXiv:2512.09403}, 2025.

\bibitem[Kleinman et~al.(2025)Kleinman, Trager, Achille, Xia, and Soatto]{kleinman2025e1}
Michael Kleinman, Matthew Trager, Alessandro Achille, Wei Xia, and Stefano Soatto.
\newblock {E1}: Learning adaptive control of reasoning effort.
\newblock \emph{NeurIPS Workshop on Efficient Reasoning (also arXiv:2510.27042)}, 2025.

\bibitem[Kolawole et~al.(2024)Kolawole, Dennis, Talwalkar, and Smith]{kolawole2024agreement}
Steven Kolawole, Don Dennis, Ameet Talwalkar, and Virginia Smith.
\newblock Agreement-based cascading for efficient inference.
\newblock \emph{arXiv preprint arXiv:2407.02348}, 2024.

\bibitem[Li et~al.(2025)Li, Zhuang, Qiang, Sun, Dai, Zhang, and Dai]{li2025matryoshka}
ChangHao Li, Yuchen Zhuang, Rushi Qiang, Haotian Sun, Hanjun Dai, Chao Zhang, and Bo~Dai.
\newblock Matryoshka pilot: Learning to drive black-box llms with llms.
\newblock In \emph{The Thirty-ninth Annual Conference on Neural Information Processing Systems}, 2025.

\bibitem[Li et~al.(2024)Li, Gu, Dong, Wang, Cheng, and Wei]{li2024direct}
Yixing Li, Yuxian Gu, Li~Dong, Dequan Wang, Yu~Cheng, and Furu Wei.
\newblock Direct preference knowledge distillation for large language models.
\newblock \emph{arXiv preprint arXiv:2406.19774}, 2024.

\bibitem[Li et~al.(2023)Li, Peng, He, Galley, Gao, and Yan]{li2023guiding}
Zekun Li, Baolin Peng, Pengcheng He, Michel Galley, Jianfeng Gao, and Xifeng Yan.
\newblock Guiding large language models via directional stimulus prompting.
\newblock In \emph{Proceedings of the 37th International Conference on Neural Information Processing Systems}, pp.\  62630--62656, 2023.

\bibitem[Lingam et~al.(2025)Lingam, Tehrani, Sanghavi, Gupta, Ghosh, Liu, Huan, and Deoras]{dot}
Vijay Lingam, Behrooz~Omidvar Tehrani, Sujay Sanghavi, Gaurav Gupta, Sayan Ghosh, Linbo Liu, Jun Huan, and Anoop Deoras.
\newblock Enhancing language model agents using diversity of thoughts.
\newblock In \emph{The 13th International Conference on Learning Representations}, 2025.
\newblock URL \url{https://openreview.net/forum?id=ZsP3YbYeE9}.

\bibitem[Madaan et~al.(2023)Madaan, Tandon, Gupta, Hallinan, Gao, Wiegreffe, Alon, Dziri, Prabhumoye, Yang, et~al.]{madaan2023self}
Aman Madaan, Niket Tandon, Prakhar Gupta, Skyler Hallinan, Luyu Gao, Sarah Wiegreffe, Uri Alon, Nouha Dziri, Shrimai Prabhumoye, Yiming Yang, et~al.
\newblock Self-refine: Iterative refinement with self-feedback.
\newblock \emph{Advances in Neural Information Processing Systems}, 36:\penalty0 46534--46594, 2023.

\bibitem[Ouyang et~al.(2022)Ouyang, Wu, Jiang, Almeida, Wainwright, Mishkin, Zhang, Agarwal, Slama, Ray, et~al.]{ouyang2022training}
Long Ouyang, Jeffrey Wu, Xu~Jiang, Diogo Almeida, Carroll Wainwright, Pamela Mishkin, Chong Zhang, Sandhini Agarwal, Katarina Slama, Alex Ray, et~al.
\newblock Training language models to follow instructions with human feedback.
\newblock \emph{Advances in neural information processing systems}, 35:\penalty0 27730--27744, 2022.

\bibitem[Schick et~al.(2023)Schick, Dwivedi-Yu, Dess{\`\i}, Raileanu, Lomeli, Hambro, Zettlemoyer, Cancedda, and Scialom]{schick2023toolformer}
Timo Schick, Jane Dwivedi-Yu, Roberto Dess{\`\i}, Roberta Raileanu, Maria Lomeli, Eric Hambro, Luke Zettlemoyer, Nicola Cancedda, and Thomas Scialom.
\newblock Toolformer: Language models can teach themselves to use tools.
\newblock \emph{Advances in Neural Information Processing Systems}, 36:\penalty0 68539--68551, 2023.

\bibitem[Shao et~al.(2024)Shao, Wang, Zhu, Xu, Song, Zhang, Li, Wu, and Guo]{grpo}
Zhihong Shao, Peiyi Wang, Qihao Zhu, Runxin Xu, Junxiao Song, Mingchuan Zhang, Y.K. Li, Y.~Wu, and Daya Guo.
\newblock Deepseekmath: Pushing the limits of mathematical reasoning in open language models, 2024.
\newblock URL \url{https://arxiv.org/abs/2402.03300}.

\bibitem[Shin et~al.(2020)Shin, Razeghi, Logan~IV, Wallace, and Singh]{shin2020autoprompt}
Taylor Shin, Yasaman Razeghi, Robert~L. Logan~IV, Eric Wallace, and Sameer Singh.
\newblock {A}uto{P}rompt: {E}liciting {K}nowledge from {L}anguage {M}odels with {A}utomatically {G}enerated {P}rompts.
\newblock In \emph{Proceedings of the 2020 Conference on Empirical Methods in Natural Language Processing (EMNLP)}, pp.\  4222--4235, Online, November 2020. Association for Computational Linguistics.
\newblock \doi{10.18653/v1/2020.emnlp-main.346}.
\newblock URL \url{https://aclanthology.org/2020.emnlp-main.346/}.

\bibitem[von Werra et~al.(2020)von Werra, Belkada, Tunstall, Beeching, Thrush, Lambert, Huang, Rasul, and Gallou{\'e}dec]{vonwerra2022trl}
Leandro von Werra, Younes Belkada, Lewis Tunstall, Edward Beeching, Tristan Thrush, Nathan Lambert, Shengyi Huang, Kashif Rasul, and Quentin Gallou{\'e}dec.
\newblock {TRL: Transformer Reinforcement Learning}.
\newblock \url{https://github.com/huggingface/trl}, 2020.

\bibitem[Xu et~al.(2025)Xu, Liu, Yin, Zhou, and Poovendran]{kodcode}
Zhangchen Xu, Yang Liu, Yueqin Yin, Mingyuan Zhou, and Radha Poovendran.
\newblock {K}od{C}ode: A diverse, challenging, and verifiable synthetic dataset for coding.
\newblock In \emph{Findings of the Association for Computational Linguistics: ACL 2025}, pp.\  6980--7008, Vienna, Austria, July 2025. Association for Computational Linguistics.
\newblock \doi{10.18653/v1/2025.findings-acl.365}.
\newblock URL \url{https://aclanthology.org/2025.findings-acl.365/}.

\bibitem[Yang \& Klein(2021)Yang and Klein]{yang-klein-2021-fudge}
Kevin Yang and Dan Klein.
\newblock {FUDGE}: Controlled text generation with future discriminators.
\newblock In \emph{Proceedings of the 2021 Conference of the North American Chapter of the Association for Computational Linguistics: Human Language Technologies}, pp.\  3511--3535, Online, June 2021. Association for Computational Linguistics.
\newblock \doi{10.18653/v1/2021.naacl-main.276}.
\newblock URL \url{https://aclanthology.org/2021.naacl-main.276/}.

\bibitem[Yao et~al.(2022)Yao, Zhao, Yu, Du, Shafran, Narasimhan, and Cao]{yao2022react}
Shunyu Yao, Jeffrey Zhao, Dian Yu, Nan Du, Izhak Shafran, Karthik~R Narasimhan, and Yuan Cao.
\newblock React: Synergizing reasoning and acting in language models.
\newblock In \emph{The eleventh international conference on learning representations}, 2022.

\bibitem[Zabounidis et~al.(2025)Zabounidis, Golatkar, Kleinman, Achille, Xia, and Soatto]{zabounidis2025re}
Renos Zabounidis, Aditya Golatkar, Michael Kleinman, Alessandro Achille, Wei Xia, and Stefano Soatto.
\newblock Re-forc: Adaptive reward prediction for efficient chain-of-thought reasoning.
\newblock \emph{NeurIPS Workshop on Efficient Reasoning (also arXiv:2511.02130)}, 2025.

\bibitem[Zhang et~al.(2022)Zhang, Zhang, Li, and Smola]{zhang2022automatic}
Zhuosheng Zhang, Aston Zhang, Mu~Li, and Alex Smola.
\newblock Automatic chain of thought prompting in large language models.
\newblock \emph{arXiv preprint arXiv:2210.03493}, 2022.

\bibitem[Zhou et~al.(2022)Zhou, Sch{\"a}rli, Hou, Wei, Scales, Wang, Schuurmans, Cui, Bousquet, Le, et~al.]{zhou2022least}
Denny Zhou, Nathanael Sch{\"a}rli, Le~Hou, Jason Wei, Nathan Scales, Xuezhi Wang, Dale Schuurmans, Claire Cui, Olivier Bousquet, Quoc Le, et~al.
\newblock Least-to-most prompting enables complex reasoning in large language models.
\newblock \emph{arXiv preprint arXiv:2205.10625}, 2022.

\end{thebibliography}
\end{document}